\newcommand{\red}[1]{\textcolor{red}{#1}}
\newcommand{\green}[1]{\textcolor{green}{#1}}
\newtheorem{theorem}{Theorem}
\newtheorem{definition}{Definition}
\definecolor{light-gray}{gray}{0.90}
\begin{document}
\title{HomoGCL: Rethinking Homophily in Graph Contrastive Learning}

\author{Wen-Zhi Li}
\affiliation{%
  \institution{CSE, Sun Yat-sen University, Guangzhou, China}
  \city{}\country{}
}
\affiliation{%
  \institution{AI Thrust, HKUST (GZ), Guangzhou, China}
  \city{}\country{}
}
\email{liwzh63@mail2.sysu.edu.cn}

\author{Chang-Dong Wang}
\authornote{Corresponding authors.}
\affiliation{%
  \institution{CSE, Sun Yat-sen University}
  \city{Guangzhou}\country{China}
}
\email{changdongwang@hotmail.com}

\author{Hui Xiong}
\authornotemark[1]
\affiliation{%
  \institution{AI Thrust, HKUST (GZ), Guangzhou, China}
  \city{}\country{}
}
\affiliation{%
  \institution{CSE, HKUST, Hong Kong, China}
  \city{}\country{}
}
\email{xionghui@ust.hk}

\author{Jian-Huang Lai}
\affiliation{%
  \institution{CSE, Sun Yat-sen University}
  \city{Guangzhou}\country{China}
}
\email{stsljh@mail.sysu.edu.cn}

\begin{abstract}
Contrastive learning (CL) has become the de-facto learning paradigm in self-supervised learning on graphs, which generally follows the ``augmenting-contrasting'' learning scheme. However, we observe that unlike CL in computer vision domain, CL in graph domain performs decently even \textit{without augmentation}. We conduct a systematic analysis of this phenomenon and argue that homophily, i.e., the principle that ``like attracts like'', plays a key role in the success of graph CL. Inspired to leverage this property explicitly, we propose HomoGCL, a model-agnostic framework to expand the positive set using neighbor nodes with neighbor-specific significances. Theoretically, HomoGCL introduces a stricter lower bound of the mutual information between raw node features and node embeddings in augmented views. Furthermore, HomoGCL can be combined with existing graph CL models in a plug-and-play way with light extra computational overhead. Extensive experiments demonstrate that HomoGCL yields multiple state-of-the-art results across six public datasets and consistently brings notable performance improvements when applied to various graph CL methods. Code is avilable at \url{https://github.com/wenzhilics/HomoGCL}.
\end{abstract}

\begin{CCSXML}
<ccs2012>
<concept>
<concept_id>10010147.10010257.10010293.10010319</concept_id>
<concept_desc>Computing methodologies~Learning latent representations</concept_desc>
<concept_significance>500</concept_significance>
</concept>
<concept>
<concept_id>10002950.10003624.10003633.10010917</concept_id>
<concept_desc>Mathematics of computing~Graph algorithms</concept_desc>
<concept_significance>500</concept_significance>
</concept>
<concept>
<concept_id>10002951.10003227.10003351</concept_id>
<concept_desc>Information systems~Data mining</concept_desc>
<concept_significance>500</concept_significance>
</concept>
</ccs2012>
\end{CCSXML}

\ccsdesc[500]{Computing methodologies~Learning latent representations}
\ccsdesc[500]{Mathematics of computing~Graph algorithms}
\ccsdesc[500]{Information systems~Data mining}

\keywords{self-supervised learning; contrastive learning; graph homophily; graph representation learning}


\maketitle

\section{Introduction}

\begin{figure}[!t]
\centering
\subfigure[Vision datasets with SimCLR]
{
\centering
\includegraphics[width=0.47\linewidth]{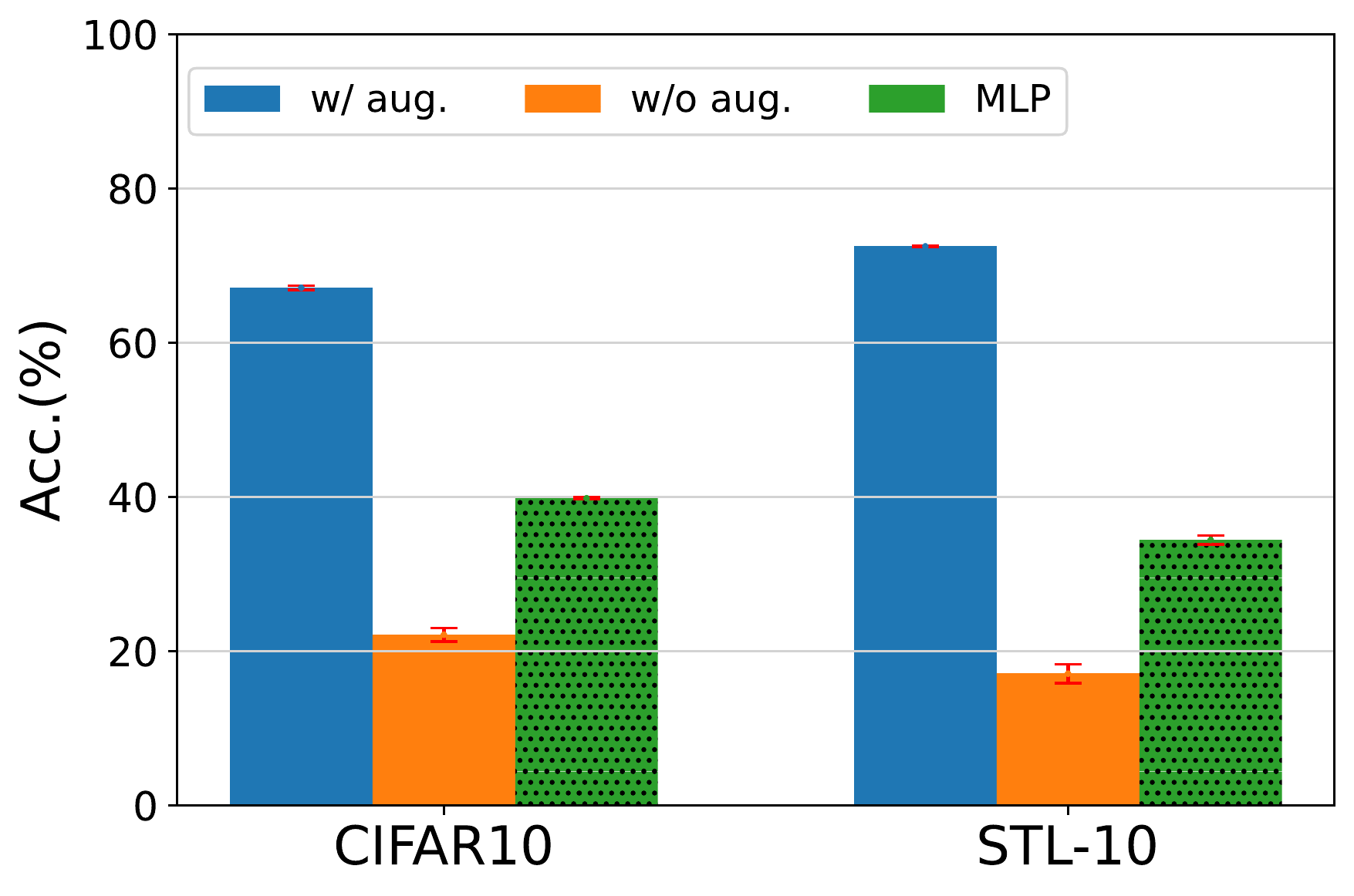}
}
\subfigure[Graph datasets with GRACE]
{
\centering
\includegraphics[width=0.47\linewidth]{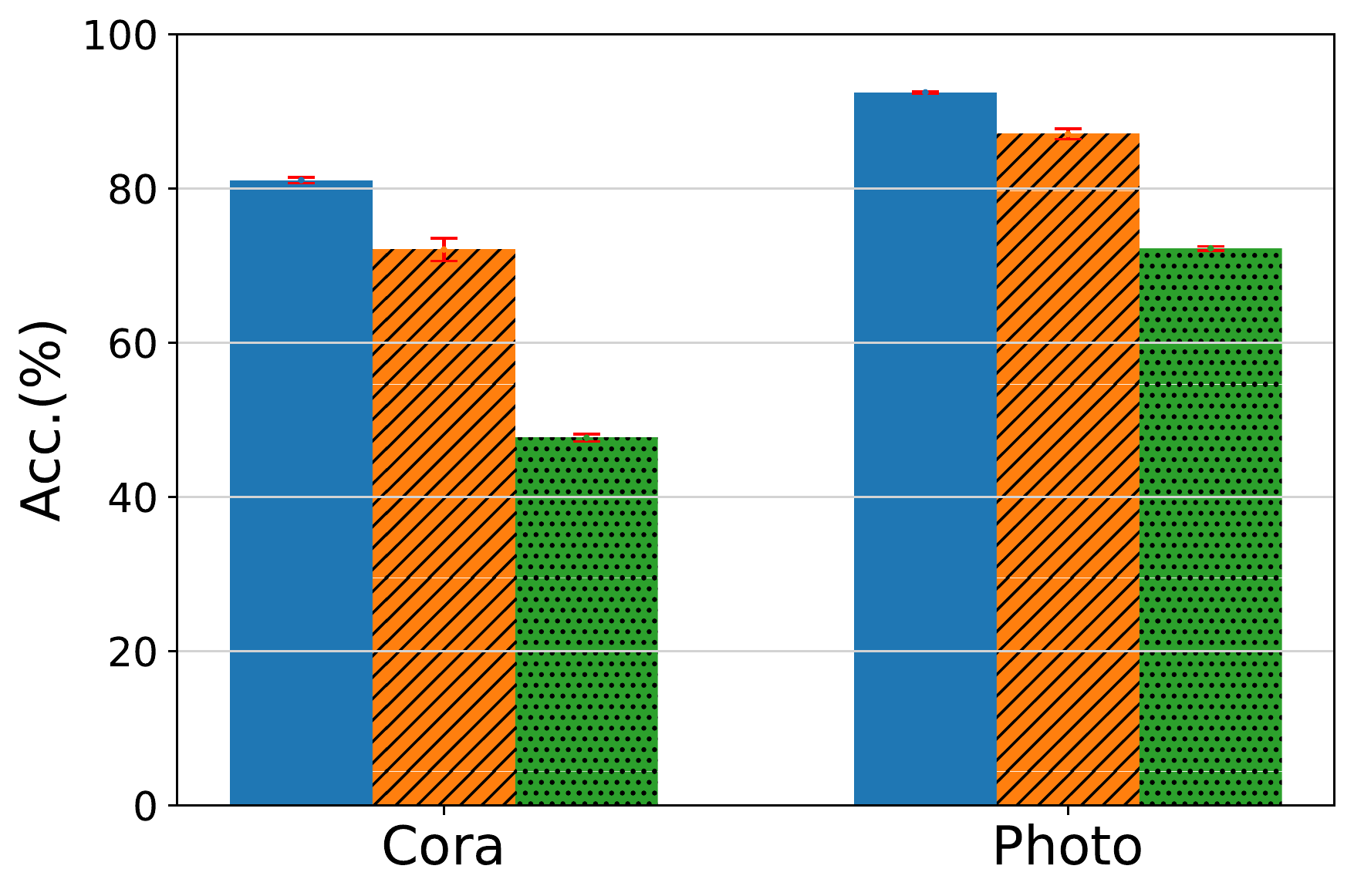}
}
\centering
\caption{Performance of CL in vision and graph domains with/without augmentation. SimCLR~\cite{simclr} and GRACE~\cite{grace}, two prevalent and similar CL architectures in vision and graph domains, are adopted on the respective datasets. MLP is the baseline by simply training a Multi-Layer Perceptron from image RGB features/raw node features. When without augmentation, the performance of vision datasets drops drastically, while the performance of graph datasets is rather stable and still outperforms the MLP counterpart.}
\label{fig:vcl_vs_gcl}
\vspace{-0.3cm}
\end{figure}

Graph Neural Networks (GNNs) have achieved overwhelming accomplishments on a variety of graph-based tasks like node classification and node clustering, to name a few~\cite{gcn, gat, graphsage, gnn1, gnn2}. They generally refer to the message passing mechanism where node features first propagate to neighbor nodes and then get aggregated to fuse the features in each layer.

Generally, GNNs are designed for supervised tasks which require adequate labeled data. However, it is hard to satisfy as annotated labels are always scarce in real-world scenarios~\cite{simclr, grace}. To tackle this common problem in deep learning, many pioneer endeavors have been made to self-supervised learning (SSL) in the computer vision domain, of which vision contrastive learning (VCL)~\cite{simclr, moco} has dominated the field. Generally, VCL follows the ``augmenting-contrasting'' learning pattern, in which the similarity between two augmentations of a sample (positive pair) is maximized, while the similarities between other samples (negative pairs) are minimized. The model can thus learn high-quality representations free of label notation. There are also many work adapting CL to graph representation learning, referred to as graph contrastive learning (GCL)~\cite{mvgrl, graphcl, gca}. Research hotspot in GCL mainly focuses on graph augmentation~\cite{simgrace, joao, autogcl, cgc, costa}, since unlike naturally rotating or cropping on images, graph augmentation would discard underlying semantic information which might result in undesirable performance. Though these elaborate graph augmentation-based approaches can achieve state-of-the-art performances on many graph-based tasks, we argue that the role of graph augmentation is still overemphasized. Empirically, we observe that GCL without augmentation can also achieve decent performance (Figure~\ref{fig:vcl_vs_gcl}(b)), which is quite different from VCL (Figure~\ref{fig:vcl_vs_gcl}(a)). A natural question arises thereby: 

\textit{What causes the huge gap between the performance declines of GCL and VCL when data augmentation is not leveraged?}

To answer this question, we conduct a systematic analysis and argue that \textit{homophily is the most important part of GCL.} Specifically, homophily is the phenomenon that ``like attracts like''~\cite{homo1}, or connected nodes tend to share the same label, which is a ubiquitous property in real-world graphs like citation networks or co-purchase networks~\cite{homo1, pitfall}. According to recent studies~\cite{geomgcn, homo2}, GNN backbones in GCL (such as GCN~\cite{gcn}, GAT~\cite{gat}, and GraphSAGE~\cite{graphsage}) heavily rely on the homophily assumption, as \textit{message passing} is applied for these models to aggregate information from direct neighbors for each node.

As a distinctive inductive bias of real-world graphs~\cite{geomgcn}, homophily is regarded as an appropriate guide in the case that node labels are not available~\cite{autossl}. Many recent GCLs have leveraged graph homophily implicitly by strengthening the relationship between connected nodes from different angles. For example, Xia et al.~\cite{progcl} tackle the false negative issue by avoiding similar neighbor nodes being negative samples, while Li et al.~\cite{gcool}, Wang et al.~\cite{clusterscl}, Park et al.~\cite{cgc}, and Lee et al.~\cite{afgcl} leverage community structure to enhance local connection. However, to the best of our knowledge, there is no such effort to directly leverage graph homophily, i.e., to treat neighbor nodes as positive.

In view of the argument, we are naturally inspired to leverage graph homophily explicitly. One intuitive approach is to simply treat neighbor nodes as positive samples indiscriminately. However, although connected nodes tend to share the same label in the homophily scenario, there also exist inter-class edges, especially near the decision boundary between two classes. Treating these inter-class connected nodes as positive (i.e., \textit{false positive}) would inevitably degenerate the overall performance. Therefore, our concern is distinguishing intra-class neighbors from inter-class ones and assigning more weights to them being positive. However, it is non-trivial since ground-truth node labels are agnostic in SSL. Thus, the main challenge is estimating the probability of neighbor nodes being positive in an unsupervised manner.

To tackle this problem, we devise HomoGCL, a model-agnostic method based on pair-wise similarity for the estimation. Specifically, HomoGCL leverages Gaussian Mixture Model (GMM) to obtain soft clustering assignments for each node, where node similarities are calculated as the indicator of the probability for neighbor nodes being \textit{true positive}. As a patch to augment positive pairs, HomoGCL is flexible to be combined with existing GCL approaches, including negative-sample-free ones like BGRL~\cite{bgrl} to yield better performance. 
Furthermore, theoretical analysis guarantees the performance boost over the base model, as the objective function in HomoGCL is a stricter lower bound of the mutual information between raw node features and augmented representations in augmented views.

We highlight the main contributions of this work as follows:
\begin{itemize}
\item We conduct a systematic analysis to study the mechanism of GCL. Our empirical study shows that graph homophily plays a key role in GCL, and many recent GCL models can be regarded as leveraging graph homophily implicitly.
\item We propose a novel GCL method, HomoGCL, to estimate the probability of neighbor nodes being positive, thus directly leveraging graph homophily.
Moreover, we theoretically show that HomoGCL introduces a stricter lower bound of mutual information between raw node features and augmented representations in augmented views.
\item Extensive experiments and in-depth analysis demonstrate that HomoGCL outperforms state-of-the-art GCL models across six public benchmark datasets. Furthermore, HomoGCL can consistently yield performance improvements when applied to various GCL methods in a plug-and-play manner.
\end{itemize}

The rest of this paper is organized as follows. In Section~\ref{sec:related}, we briefly review related work. In Section~\ref{sec:meth}, we first provide the basic preliminaries of graph contrastive learning. Then, we conduct an empirical study to delve into the graph homophily in GCL, after which we propose the HomoGCL model to directly leverage graph homophily. Theoretical analysis and complexity analysis are also provided. In Section~\ref{sec:exp}, we present the experimental results and in-depth analysis of the proposed model. Finally, we conclude this paper in Section~\ref{sec:conc}.

\section{Related Work}\label{sec:related}

In this section, we first review pioneer work on graph contrastive learning. Then, we review graph homophily, which is believed to be a useful inductive bias for graph data in-the-wild~\cite{homo1}.

\subsection{Graph Contrastive Learning}

GCL has gained popularity in the graph SSL community for its expressivity and simplicity~\cite{graphcl, gcc, bestpractice}. It generally refers to the paradigm of making pair-view representations to agree with each other under proper data augmentations. Among them, DGI~\cite{dgi}, HDI~\cite{hdmi}, GMI~\cite{gmi}, and InfoGCL~\cite{infogcl} directly measure mutual information between different views. MVGRL~\cite{mvgrl} maximizes information between the cross-view representations of nodes and graphs. GRACE~\cite{grace} and its variants GCA~\cite{gca}, ProGCL~\cite{progcl}, ARIEL~\cite{ariel}, gCooL~\cite{gcool} adopt SimCLR~\cite{simclr} framework for \textit{node-level} representations, while for graph-level representations, GraphCL~\cite{graphcl}, JOAO~\cite{joao}, SimGRACE~\cite{simgrace} also adopt the SimCLR framework. Additionally, G-BT~\cite{gbt}, BGRL~\cite{bgrl}, AFGRL~\cite{afgcl}, and CCA-SSG~\cite{cca} adopt new CL frameworks that free them from negative samples or even data augmentations. In this paper, we propose a method to expand \textit{positive samples} in GCL on \textit{node-level} representation learning, which can be combined with existing node-level GCL in a plug-and-play way.

\begin{figure*}[!t]
\centering
\subfigure[Similarity histogram on CIFAR10]
{
\centering
\includegraphics[width=0.32\linewidth]{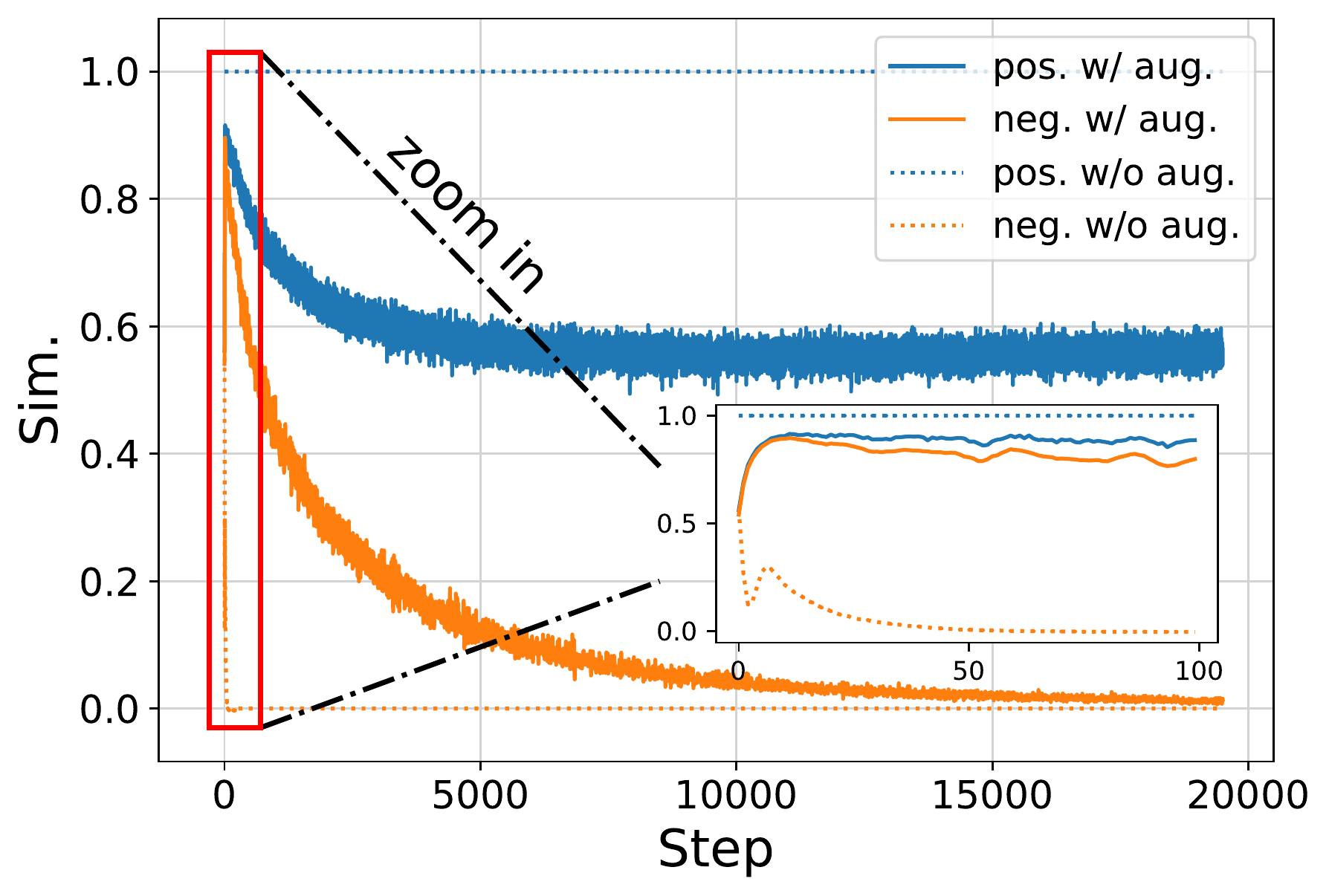}
}
\subfigure[Similarity histogram on Cora]
{
\centering
\includegraphics[width=0.31\linewidth]{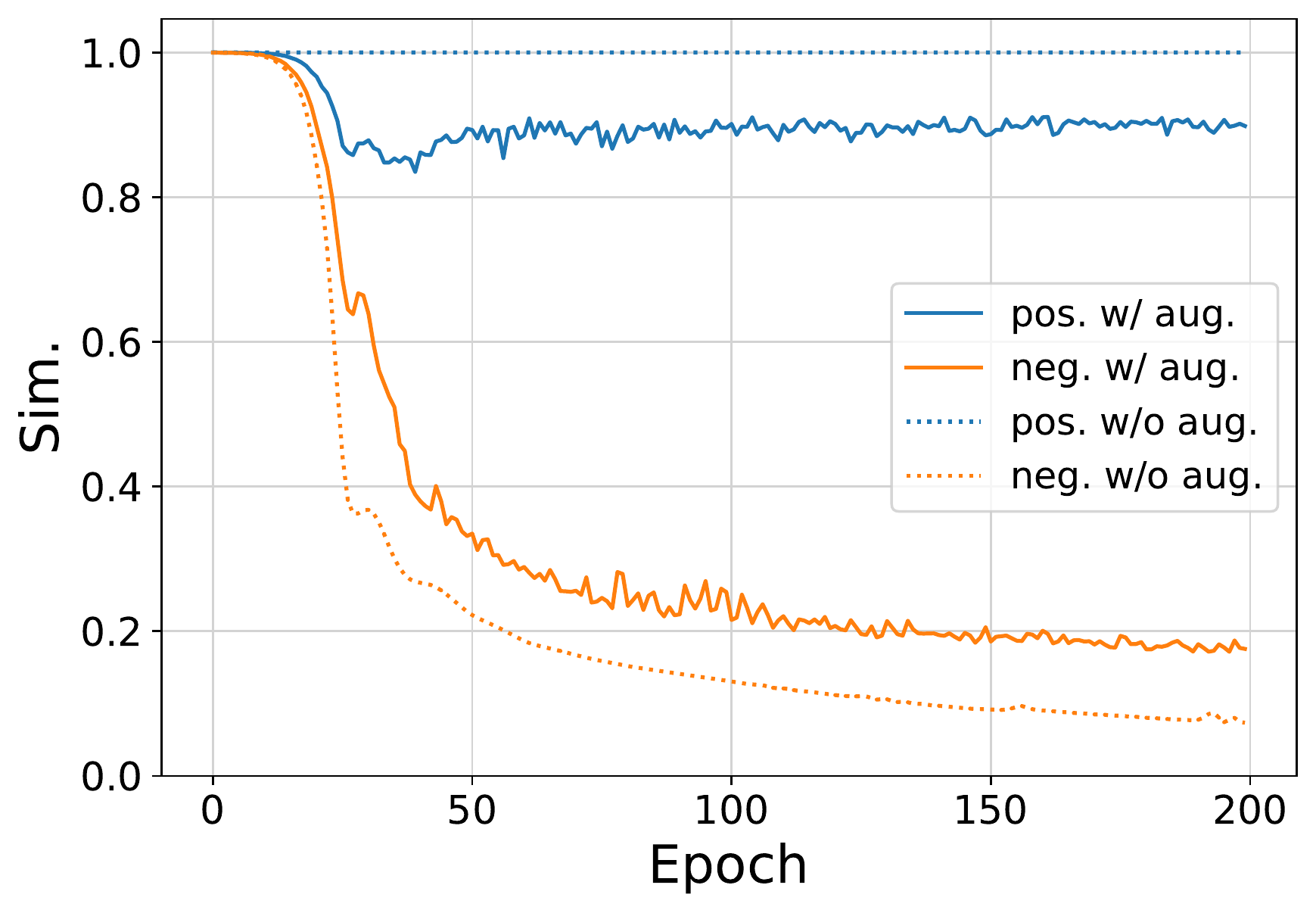}
}
\subfigure[Ablation study on two graph datasets]
{
\centering
\includegraphics[width=0.33\linewidth]{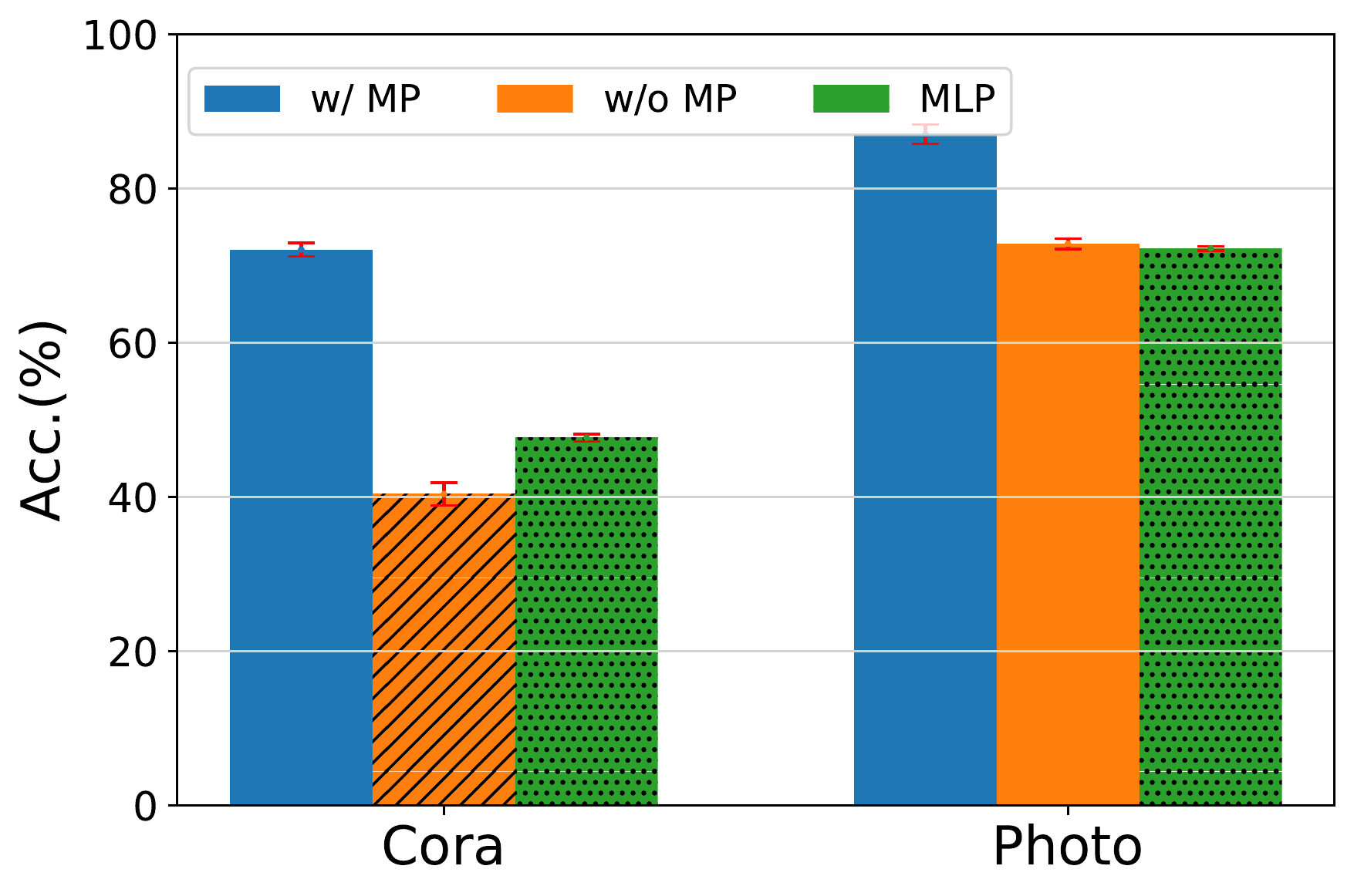}
}
\centering
\caption{Empirical studies on graph homophily. (a), (b) are similarities between positive and negative pairs w.r.t. the training processes with/without augmentation on vision dataset CIFAR10 and graph dataset Cora. The similarity between negative pairs drops to 0 swiftly on CIFAR10 without augmentation, while the similarity between negative pairs drops gradually on Cora without augmentation, which is analogous to its counterpart with augmentation. Please note that the similarity between positive pairs remains as 1 when without augmentation since the two views are identical. To analyze the role that homophily played in this phenomenon, we conduct an ablation study for GRACE without augmentation in (c) by (1) only enabling message passing (w/ MP), and (2) disabling message passing (w/o MP), together with the MLP baseline on two graph datasets Cora and Photo. The performance shows the functionality of message passing, which relies on the homophily assumption.
}\label{fig:emp}
\end{figure*}

\subsection{Graph Homophily}

Graph homophily, i.e., ``birds of a feather flock together''~\cite{homo1}, indicates that connected nodes often belong to the same class, which is useful prior knowledge in real-world graphs like citation networks, co-purchase networks, or friendship networks~\cite{pitfall}. A recent study AutoSSL~\cite{autossl} shows graph homophily is an effective guide in searching the weights on various self-supervised pretext tasks, which reveals the effectiveness of homophily in graph SSL. However, there is no such effort to leverage homophily directly in GCL to the best of our knowledge. It is worth noting that albeit heterophily graphs also exist~\cite{geomgcn} where dissimilar nodes tend to be connected, the related study is still at an early stage even in the supervised setting~\cite{linkx}. Therefore, to be consistent with previous work on GCL, we only consider the most common homophily graphs in this work.

\section{Methodology}\label{sec:meth}

In this section, we first introduce the preliminaries and notations about GCL. We then conduct a systematic investigation on the functionality of homophily in GCL. Finally, we propose the HomoGCL method to leverage graph homophily directly in GCL.

\subsection{Preliminaries and Notations}

Let $\mathcal{G=\{V,E \}}$ be a graph, where $\mathcal{V}=\{v_1, v_2,\cdots v_N\}$ is the node set with $N$ nodes and $\mathcal{E} \subseteq \mathcal{V} \times \mathcal{V}$ is the edge set. The adjacency matrix and the feature matrix are denoted as $\mathbf{A} \in \{ 0,1 \}^{N \times N}$ and $\mathbf{X} \in \mathbb{R}^{N \times d}$, respectively, where $\mathbf{A}_{ij}=1$ iff $(v_i,v_j)\in\mathcal{E}$, and $\boldsymbol{x}_i\in\mathbb{R}^d$ is the $d$-dim raw feature of node $v_i$.
The main notions used throughout the paper are summarized in Appendix~\ref{app:notation}.

Given a graph $\mathcal{G}$ with no labels, the goal of GCL is to train a graph encoder $f_{\mathbf{\Theta}}(\mathbf{X},\mathbf{A})$ and get node embeddings that can be directly applied to downstream tasks like node classification and node clustering. Take one of the most popular GCL framework GRACE~\cite{grace} as an example, two augmentation functions $t_1$, $t_2$ (typically randomly dropping edges and masking features) are firstly applied to the graph $\mathcal{G}$ to generate two graph views $\mathcal{G}_1=(\mathbf{X}_1,\mathbf{A}_1)=t_1(\mathcal{G})$ and $\mathcal{G}_2=(\mathbf{X}_2,\mathbf{A}_2)=t_2(\mathcal{G})$. Then, the two augmented graphs are encoded by the same GNN encoder, after which we get node embeddings $\mathbf{U}=f_{\mathbf{\Theta}}(\mathbf{X}_1,\mathbf{A}_1)$ and $\mathbf{V}=f_{\mathbf{\Theta}}(\mathbf{X}_2,\mathbf{A}_2)$. Finally, the loss function is defined by the InfoNCE~\cite{infonce} loss as
\begin{equation}\label{eq:grace}
\mathcal{L}=\frac{1}{2 N} \sum_{i=1}^{N}\left(\ell\left(\boldsymbol{u}_{i}, \boldsymbol{v}_{i}\right)+\ell\left(\boldsymbol{v}_{i}, \boldsymbol{u}_{i}\right)\right),
\vspace{-0.3cm}
\end{equation}
with
\begin{equation}\label{eq:fine_grain_grace}
\begin{aligned}
&\ell\left(\boldsymbol{u}_{i}, \boldsymbol{v}_{i}\right)=\\
&\log \frac{e^{\theta(\boldsymbol{u}_{i}, \boldsymbol{v}_{i}) / \tau}}{\underbrace{e^{\theta(\boldsymbol{u}_{i}, \boldsymbol{v}_{i}) / \tau}}_{\text{positive pair}}+\underbrace{\sum\limits_{j \neq i} e^{\theta(\boldsymbol{u}_{i}, \boldsymbol{v}_{j}) / \tau}}_{\text{inter-view negative pairs}}+\underbrace{\sum\limits_{j \neq i} e^{\theta(\boldsymbol{u}_{i}, \boldsymbol{u}_{j}) / \tau}}_{\text{intra-view negative pairs}}},
\end{aligned}
\end{equation}
where $\theta(\cdot,\cdot)$ is the similarity function and $\tau$ is a temperature parameter.
In principle, any GNN can be served as the graph encoder. Following recent work~\cite{gca, progcl, ariel}, we adopt a two-layer graph convolutional network (GCN)~\cite{gcn} as the encoder $f_{\mathbf{\Theta}}$ by default, which can be formalized as
\begin{equation}
f_{\mathbf{\Theta}}(\mathbf{X},\mathbf{A})=\mathbf{H}^{(2)}=\mathbf{\hat{A}}\sigma(\mathbf{\hat{A}}\mathbf{X}\mathbf{W}^{(1)})\mathbf{W}^{(2)},
\end{equation}
where $\mathbf{\hat{A}}=\mathbf{\tilde{D}}^{-1/2}(\mathbf{A}+\mathbf{I}_N)\mathbf{\tilde{D}}^{-1/2}$ with $\mathbf{\tilde{D}}$ being the degree matrix of $\mathbf{A}+\mathbf{I}_N$ and $\mathbf{I}_N$ being the identity matrix, $\mathbf{W}$ are learnable weight matrices, and $\sigma(\cdot)$ is the $ReLU$ activation function~\cite{relu}.

\begin{figure*}[!t]
\centering
\subfigure[Workflow of HomoGCL]
{
\centering
\includegraphics[width=0.78\linewidth]{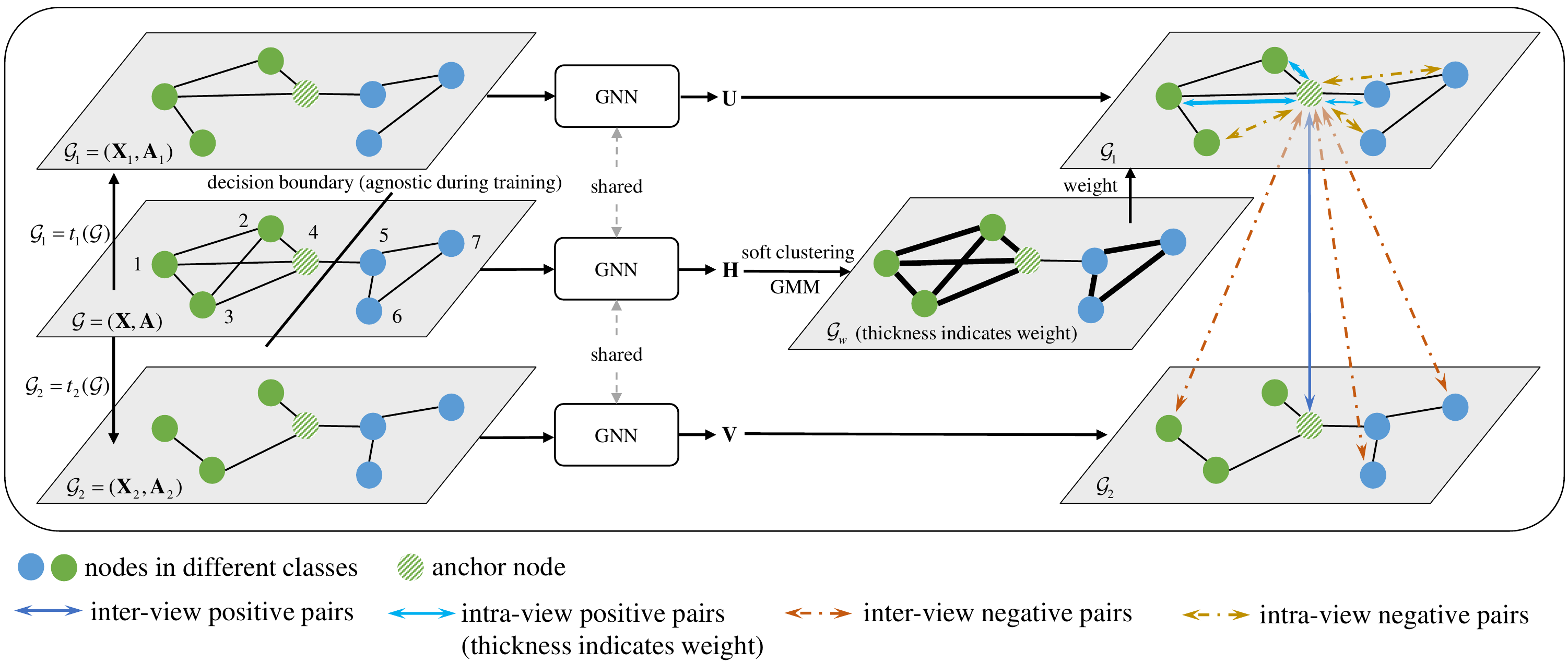}
}
\subfigure[Soft clustering]
{
\centering
\includegraphics[width=0.19\linewidth]{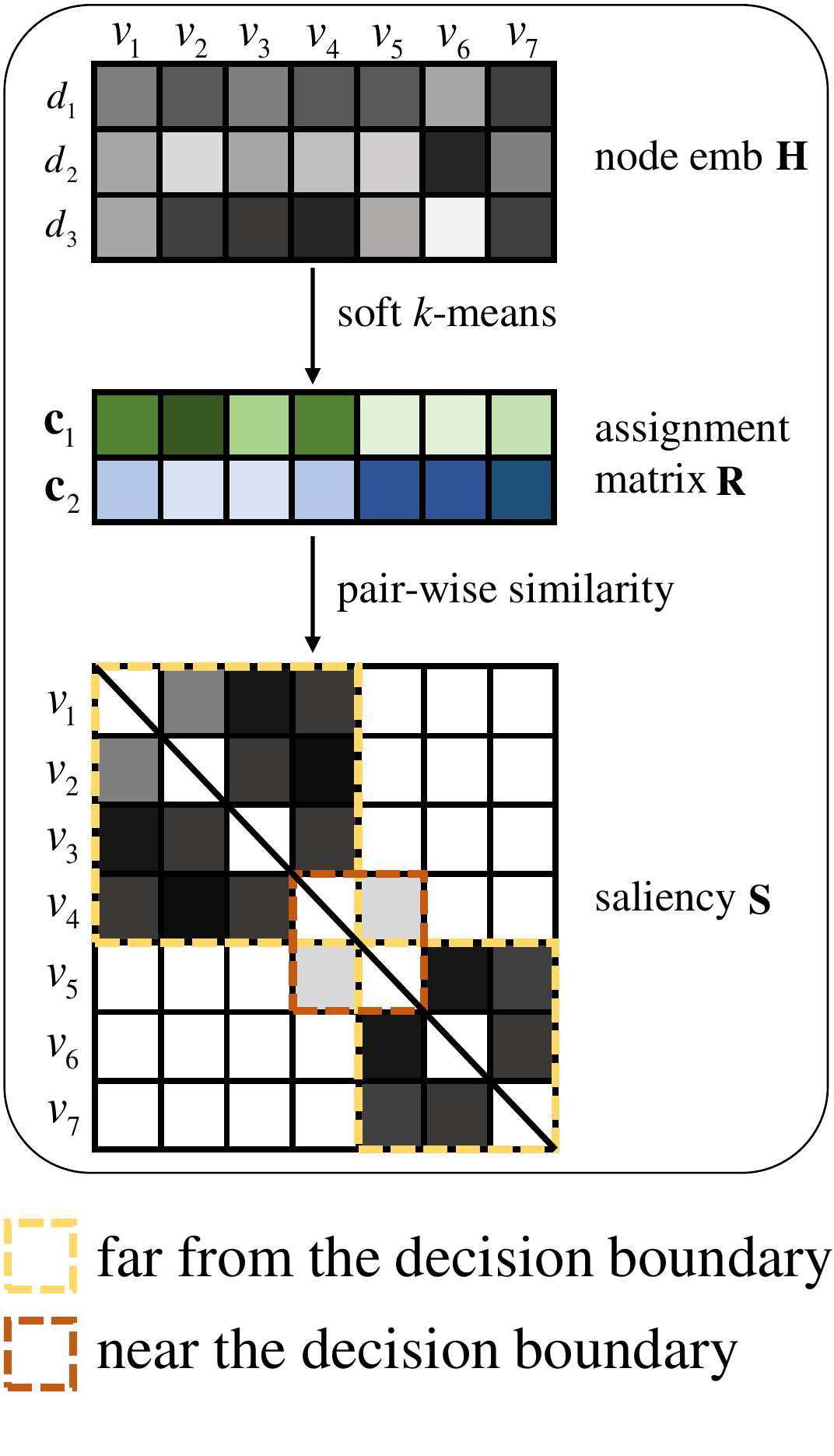}
}
\centering
\caption{The pipeline of HomoGCL (a). Two graph views $\mathcal{G}_1$ and $\mathcal{G}_2$ are first generated via graph augmentation from graph $\mathcal{G}$, after which the three graphs are fed into the shared GNN encoder to learn representations. The representation of $\mathcal{G}$ is utilized to generate soft clustering assignments via Gaussian Mixture Model, based on which the edge saliency is calculated (b). Edge saliency is leveraged as the weight of neighbor nodes being positive.}\label{fig:flowchart}
\end{figure*}

\subsection{Homophily in GCL: an Empirical Investigation}

In Figure~\ref{fig:vcl_vs_gcl}, we can observe that the performance of VCL collapses without data augmentation, while the performance of GCL is still better than the MLP counterpart, which implies that there is a great discrepancy in the mechanism between VCL and GCL, although they adopt seemingly similar frameworks SimCLR~\cite{simclr} and GRACE~\cite{grace}. To probe into this phenomenon, we plot the similarities between positive and negative samples w.r.t. the training processes, as shown in Figure~\ref{fig:emp} (a), (b). 

From the figures, we can see that for vision dataset CIFAR10, the similarity between negative pairs drops swiftly without augmentation. We attribute this to that the learning objective is too trivial by enlarging the similarity between the sample and \textit{itself} while reducing the similarity between the sample and any other sample in the batch. Finally, the model can only learn one-hot-like embeddings, which eventually results in the poor performance. However, for graph dataset Cora, the similarity between negative pairs without augmentation still drops gradually and consistently, which is analogous to its counterpart with augmentation. We attribute this phenomenon to the message passing in GNN. Albeit InfoNCE loss also enlarges the similarity between the identical samples in two views while reducing the similarity between two different samples, the message passing in GNN enables each node to aggregate information from its neighbors, which leverages graph homophily implicitly to avoid too trivial discrimination. From another perspective, the message passing in GNN enables each node to no longer be independent of its neighbors. As a result, only the similarities between two relatively far nodes (e.g., neighbors $\geq 3$-hop for a two-layer GCN), are reduced.

\begin{definition}
(Homophily). The homophily in a graph $\mathcal{G}$ is defined as the fraction of intra-class edges. Formally, with node label $\mathbf{Y}$, homophily is defined as
\begin{equation}
h(\mathcal{G}, \mathbf{Y})=\frac{1}{|\mathcal{E}|} \sum_{\left(v_{1}, v_{2}\right) \in \mathcal{E}} \mathbbm{1}\left(\boldsymbol{y}_1=\boldsymbol{y}_2\right),
\end{equation}
where $\boldsymbol{y}_i$ denotes the label of $v_i$ and $\mathbbm{1}(\cdot)$ is the indication function.
\end{definition}

As analyzed above, we hypothesize that message passing which relies on the homophily assumption prevents GCL without augmentation from corruption. To validate this, we conduct another ablation experiment on two graph datasets Cora and Photo, as shown in Figure~\ref{fig:emp} (c).

\subsubsection*{\textbf{Experimental setup}} 
We devise two variants of GRACE~\cite{grace} without augmentation, i.e., GRACE w/ MP and GRACE w/o MP, where the former is the version that message passing is leveraged (i.e., the w/o aug. version in Figure~\ref{fig:vcl_vs_gcl}(b)), while the latter is the version that message passing is blocked via forcing each node to only connect with itself (i.e., substituting the adjacency matrix with the eye matrix). For MLP, we directly train an MLP with raw node features as the input in a supervised learning manner without graph structure, which is the same as in Figure~\ref{fig:vcl_vs_gcl}. The two variants are equipped with the same 2-layer GCN backbone of 256 dim hidden embeddings and are trained until convergence.

\subsubsection*{\textbf{Observations}} 
From Figure~\ref{fig:emp} (c), we can see that the performance of GRACE (w/o MP) is only on par with or even worse than the MLP counterpart, while GRACE (w/ MP) outperforms both by a large margin on both datasets. It meets our expectation that by disabling message passing, nodes in GRACE (w/o MP) cannot propagate features to their neighbors, which degenerates them to a similar situation of VCL without augmentation. GRACE (w/ MP), on the other hand, can still maintain the performance even without raw features. In a nutshell, this experiment verifies our hypothesis that message passing, which relies on the homophily assumption, is the key factor of GCL.

\subsection{HomoGCL}

As analyzed above, graph homophily plays a crucial role in the overall performance of GCL. Therefore, we are naturally inspired to leverage this property explicitly. Simply assigning neighbor nodes as positive is non-ideal, as nodes near the decision boundaries tend to link with nodes from another class, thus being \textit{false positive}. To mitigate such effect, it is expected to estimate the probability of neighbor nodes being \textit{true positive}. However, the task is intractable as node labels are not available to identify the boundaries in the SSL setting. To tackle this challenge, we leverage GMM on $k$-means hard clusters to get soft clustering assignments of the initial graph $\mathcal{G}$, where pair-wise similarity (saliency) is calculated as the aforementioned probability. The overall framework of HomoGCL and soft clustering are illustrated in Figure~\ref{fig:flowchart}. Please note that although we introduce HomoGCL based on GRACE~\cite{grace} framework as an example, HomoGCL is framework-agnostic and can be combined with other GCL frameworks.

\subsubsection*{\textbf{Soft clustering for pair-wise node similarity}}
An unsupervised method like clustering is needed to estimate the probability of neighbor nodes being true positive. However, traditional clustering methods like $k$-means can only assign a hard label for each node, which cannot satisfy our needs. To tackle this problem, we view $k$-means as a special case of GMM\cite{prml, autossl}, where soft clustering is made possibile based on the posterior probabilities. Specifically, for GMM with $k$ centroids $\{\boldsymbol{c}_1, \boldsymbol{c}_2, \cdots, \boldsymbol{c}_k\}$ defined by the mean embeddings of nodes in different $k$-means hard labels, posterior probability can be calculated as
\begin{equation}
p\left(\boldsymbol{h}_i \mid \boldsymbol{c}_{j}\right)=\frac{1}{\sqrt{2 \pi \sigma^{2}}} \exp \left(-\frac{\left\|\boldsymbol{h}_i-\boldsymbol{c}_{j}\right\|_{2}}{2 \sigma^{2}}\right),
\end{equation}
where $\sigma^2$ is the variance of Gaussian distribution. By considering an equal prior $p(\boldsymbol{c}_1)=p(\boldsymbol{c}_2)=\cdots=p(\boldsymbol{c}_k)$, the probability of node feature $\boldsymbol{h}_i$ belonging to cluster $\boldsymbol{c}_j$ can be calculated by the Bayes rule as
\begin{equation}\label{eq:gmm}
\begin{aligned}
p\left(\boldsymbol{c}_{j} \mid \boldsymbol{h}_i\right)=\frac{p\left(\boldsymbol{c}_{j}\right) p\left(\boldsymbol{h}_i \mid \boldsymbol{c}_{j}\right)}{\sum\limits_{r=1}^{k} p\left(\boldsymbol{c}_{r}\right) p\left(\boldsymbol{h}_i \mid \boldsymbol{c}_{r}\right)}=\frac{\exp \left(-\frac{\left(\boldsymbol{h}_i-\boldsymbol{c}_{j}\right)^{2}}{2 \sigma^{2}}\right)}{\sum\limits_{r=1}^{k} \exp \left(-\frac{\left(\boldsymbol{h}_i-\boldsymbol{c}_{r}\right)^{2}}{2 \sigma^{2}}\right)}.
\end{aligned}
\end{equation}
In this way, we can get a cluster assignment matrix $\mathbf{R}\in\mathbb{R}^{N\times k}$ where $\mathbf{R}_{ij}=p(\boldsymbol{c}_{j} \mid \boldsymbol{h}_i)$ indicates the soft clustering value between node $v_i$ and cluster $\boldsymbol{c}_j$.

Based on the assignment matrix $\mathbf{R}$, we are able to calculate a node saliency $\mathbf{S}_{ij}$ between any connected node pair ($v_i$, $v_j$) via $\mathbf{S}_{ij}=\mathrm{norm}(\mathbf{R}_{i})\cdot \mathrm{norm}(\mathbf{R}_j^\top)$ with $\mathrm{norm}(\cdot)$ being the $L_2$ normalization on the cluster dimension. $\mathbf{S}_{ij}$ can thus indicate the connection intensity between node $v_i$ and $v_j$, which is an estimated probability of neighbors being true positive.

\subsubsection*{\textbf{Loss function}}
As the probability of neighbors being positive, node saliency $\mathbf{S}$ can be utilized to expand positive samples in both views. Specifically, Eq.~\eqref{eq:fine_grain_grace} is converted to
\begin{equation}\label{eq:infonceloss}
\ell_{cont}(\boldsymbol{u}_i,\boldsymbol{v}_i)=\log \frac{\mathrm{pos}}{\mathrm{pos}+\mathrm{neg}}
\end{equation}
with
\begin{align}
\mathrm{pos}&=\underbrace{e^{\theta(\boldsymbol{u}_i,\boldsymbol{v}_i)/ \tau}}_{\text{inter-view positive pair}}+\underbrace{\sum_{j\in \mathcal{N}_{\boldsymbol{u}}(i)}e^{\theta(\boldsymbol{u}_i,\boldsymbol{u}_j)/\tau}\cdot \mathbf{S}_{ij}}_{\text{intra-view positive pairs}},\label{eq:pos}\\
\mathrm{neg}&=\underbrace{\sum\limits_{j \notin \{i\cup\mathcal{N}_{\boldsymbol{v}}(i)\}}e^{\theta(\boldsymbol{u}_i,\boldsymbol{v}_j)/ \tau}}_{\text{inter-view negative pairs}} + \underbrace{\sum\limits_{j \notin \{i\cup\mathcal{N}_{\boldsymbol{u}}(i)\}}e^{\theta(\boldsymbol{u}_i,\boldsymbol{u}_j)/ \tau}}_{\text{intra-view negative pairs}},\label{eq:neg}
\end{align}
where $\mathcal{N}_{\boldsymbol{u}}(i)$, $\mathcal{N}_{\boldsymbol{v}}(i)$ are the neighbor sets of node $v_i$ in two views. The contrastive loss is thus defined as
\begin{equation}\label{eq:cont}
\mathcal{L}_{cont}=\frac{1}{2 N} \sum_{i=1}^{N}\left(\ell_{cont}\left(\boldsymbol{u}_{i}, \boldsymbol{v}_{i}\right)+\ell_{cont}\left(\boldsymbol{v}_{i}, \boldsymbol{u}_{i}\right)\right).
\end{equation}
In addition to the contrastive loss, we also leverage the homophily loss~\cite{autossl} explicitly via
\begin{equation}\label{eq:homoloss}
\begin{aligned}
\mathcal{L}_{homo}=\frac{1}{k|\mathcal{E}|} \sum_{r=1}^{k} \sum_{\left(v_{i}, v_{j}\right) \in \mathcal{E}} \mathrm{MSE}\left(p\left(\boldsymbol{c}_{r} \mid \boldsymbol{h}_{i}\right), p\left(\boldsymbol{c}_{r} \mid \boldsymbol{h}_{j}\right)\right),
\end{aligned}
\end{equation}
where $\mathrm{MSE}(\cdot)$ is the Mean Square Error. The contrastive loss and the homophily loss are combined in a multi-task learning manner with coefficient $\alpha$ as
\begin{equation}\label{eq:loss}
\begin{aligned}
\mathcal{J}=\mathcal{L}_{cont}+\alpha \mathcal{L}_{homo}.
\end{aligned}
\end{equation}

It is noteworthy that since HomoGCL is a way to expand positive samples, it can be combined with many node-level GCLs — even negative-free ones like BGRL~\cite{bgrl} — via the saliency $\mathbf{S}$ and the homophily loss Eq.~\eqref{eq:homoloss}, which will be discussed in Section~\ref{sec:plusbgrl}.

\subsection{Theoretical Analysis}

Though simple and intuitive by design, the proposed HomoGCL framework is theoretically guaranteed to boost the performance of base models from the Mutual Information (MI) maximization perspective, as induced in Theorem~\ref{theorem:1} with GRACE as an example.
\begin{theorem}
The newly proposed contrastive loss $\mathcal{L}_{cont}$ in Eq.~\eqref{eq:cont} is a stricter lower bound of MI between raw node features $\mathbf{X}$ and node embeddings $\mathbf{U}$ and $\mathbf{V}$ in two augmented views, comparing with the raw contrastive loss $\mathcal{L}$ in Eq.~\eqref{eq:grace} proposed by GRACE. Formally, 
\begin{equation}
\mathcal{L} \leq \mathcal{L}_{cont} \leq I(\mathbf{X};\mathbf{U},\mathbf{V}).
\end{equation}
\label{theorem:1}
\vspace{-0.2cm}
\end{theorem}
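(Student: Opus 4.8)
The plan is to split the double inequality into the two upper bounds by $I(\mathbf{X};\mathbf{U},\mathbf{V})$, which I treat as the routine part, and the middle inequality $\mathcal{L}\le\mathcal{L}_{cont}$, which carries the actual content and which I would prove term by term. For the upper bounds, I would first note that both $\ell$ in Eq.~\eqref{eq:fine_grain_grace} and $\ell_{cont}$ in Eq.~\eqref{eq:infonceloss} are logarithms of ratios of the form $\tfrac{\mathrm{pos}}{\mathrm{pos}+\mathrm{neg}}\in(0,1]$ with $\mathrm{pos},\mathrm{neg}\ge 0$, so $\mathcal{L}\le 0$ and $\mathcal{L}_{cont}\le 0$, while $I(\mathbf{X};\mathbf{U},\mathbf{V})\ge 0$ always; hence $\mathcal{L}_{cont}\le I(\mathbf{X};\mathbf{U},\mathbf{V})$ (and likewise $\mathcal{L}\le I$) is immediate. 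The sharper, interpretive claim — that $\mathcal{L}_{cont}$ is a genuine \emph{variational} lower bound on this particular MI — I would obtain by the standard InfoNCE reduction used for GRACE~\cite{grace,infonce,gmi}: the cross-entropy term separating the enlarged positive set from the negatives is a Donsker--Varadhan/NWJ-type estimator of the MI between $\boldsymbol{u}_i$ and the bundle formed by $\boldsymbol{v}_i$ together with the saliency-reweighted neighbor embeddings, and since each embedding is the image of $\mathbf{X}$ under the shared encoder $f_{\mathbf{\Theta}}$ composed with the (stochastic) augmentations and message passing, the data-processing inequality bounds it by $I(\mathbf{X};\mathbf{U},\mathbf{V})$; averaging over $i$ and the two view orderings preserves the bound.

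For the middle inequality I would argue pointwise: it suffices to show $\ell(\boldsymbol{u}_i,\boldsymbol{v}_i)\le\ell_{cont}(\boldsymbol{u}_i,\boldsymbol{v}_i)$ for each $i$ (and the same with $\boldsymbol{u},\boldsymbol{v}$ interchanged), since $\mathcal{L}$ and $\mathcal{L}_{cont}$ are the same average $\tfrac{1}{2N}\sum$ of these $2N$ terms. Fix $i$ and write $P=e^{\theta(\boldsymbol{u}_i,\boldsymbol{v}_i)/\tau}$ for the shared inter-view positive, $Q=\sum_{j\in\mathcal{N}_{\boldsymbol{u}}(i)}e^{\theta(\boldsymbol{u}_i,\boldsymbol{u}_j)/\tau}\mathbf{S}_{ij}$ for the extra intra-view positives of Eq.~\eqref{eq:pos}, $N_{\mathrm{full}}$ for the full negative mass of Eq.~\eqref{eq:fine_grain_grace}, and $N_{\mathrm{red}}$ for the reduced negative mass of Eq.~\eqref{eq:neg}. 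Two observations drive the argument: (i) $Q\ge 0$, because $\mathbf{S}_{ij}=\mathrm{norm}(\mathbf{R}_i)\cdot\mathrm{norm}(\mathbf{R}_j)^{\top}$ is an inner product of entrywise-nonnegative $L_2$-normalized posterior vectors; and (ii) $N_{\mathrm{red}}\le N_{\mathrm{full}}$, since passing from Eq.~\eqref{eq:fine_grain_grace} to Eq.~\eqref{eq:neg} only deletes the neighbor exponentials from the inter- and intra-view negative sums, and those are nonnegative. Since $g(x,c):=\tfrac{x}{x+c}$ is nondecreasing in $x\ge 0$ and nonincreasing in $c\ge 0$,
\begin{align*}
\ell(\boldsymbol{u}_i,\boldsymbol{v}_i)=\log g(P,N_{\mathrm{full}})
&\le \log g(P{+}Q,\,N_{\mathrm{full}}) \\
&\le \log g(P{+}Q,\,N_{\mathrm{red}})=\ell_{cont}(\boldsymbol{u}_i,\boldsymbol{v}_i),
\end{align*}
where the first inequality uses (i) and the second uses (ii). Summing over $i$ and the two orderings then yields $\mathcal{L}\le\mathcal{L}_{cont}$.

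The genuinely delicate point will not be the monotonicity bookkeeping just sketched but the variational interpretation in the first paragraph: to assert rigorously that $\mathcal{L}_{cont}$ lower-bounds \emph{this} MI $I(\mathbf{X};\mathbf{U},\mathbf{V})$ one must (a) pin down the joint law under which a node's "positives" are sampled — here induced jointly by the graph (homophily) and by the saliency weights $\mathbf{S}$, which are themselves functions of the current embeddings — and exhibit it as a stochastic post-processing of $\mathbf{X}$ so that the data-processing inequality applies, and (b) account for the finite-batch (empirical) form of the objective versus its population counterpart, which in InfoNCE-type bounds contributes only an additive $\log(\cdot)$ slack that does not disturb the stated inequality. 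If one is content with the weaker reading in which ``$\le I$'' merely records nonnegativity of mutual information, this step collapses entirely and the whole theorem reduces to the pointwise comparison above.
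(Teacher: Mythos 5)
Your proof of the middle inequality $\mathcal{L}\le\mathcal{L}_{cont}$ is correct and is essentially the paper's own argument: the paper likewise compares $\ell$ and $\ell_{cont}$ pointwise, noting that the positive mass can only grow while the GRACE denominator dominates the new one, then averages over $i$ and the two orderings. Your two-step factorization through the monotonicity of $g(x,c)=x/(x+c)$ is in fact marginally more economical, since it needs only $\mathbf{S}_{ij}\ge 0$ and $N_{\mathrm{red}}\le N_{\mathrm{full}}$, whereas the paper's single denominator comparison also invokes $\mathbf{S}_{ij}\le 1$ to absorb the reweighted neighbor terms into the deleted negatives. Where you genuinely diverge is the second inequality. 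Your fallback, $\mathcal{L}_{cont}\le 0\le I(\mathbf{X};\mathbf{U},\mathbf{V})$ by sign alone, does prove the literal statement but renders it vacuous; the paper instead takes exactly the ``sharper, interpretive'' route you sketch and decline to complete. Concretely, it bounds each half $\mathcal{L}_{cont}^{(1)}\le I_{\mathrm{NCE}}(\mathbf{U};\mathbf{V})$ by (a) using graph sparsity ($|\{i\}\cup\mathcal{N}(i)|\ll N$) to replace $\mathrm{neg}$ with the full sums $\sum_j e^{\theta(\boldsymbol{u}_i,\boldsymbol{v}_j)}+\sum_j e^{\theta(\boldsymbol{u}_i,\boldsymbol{u}_j)}$ and then dropping the intra-view sum, and (b) bounding $\mathrm{pos}\le M\cdot e^{\theta(\boldsymbol{u}_i,\boldsymbol{v}_i)}\le N\cdot e^{\theta(\boldsymbol{u}_i,\boldsymbol{v}_i)}$; it then chains the standard $I_{\mathrm{NCE}}(\mathbf{U};\mathbf{V})\le I(\mathbf{U};\mathbf{V})$ with two data-processing steps from the Markov structures $\mathbf{U}\leftarrow\mathbf{X}\rightarrow\mathbf{V}$ and $\mathbf{X}\rightarrow(\mathbf{U},\mathbf{V})\rightarrow\mathbf{U}$ to reach $I(\mathbf{X};\mathbf{U},\mathbf{V})$. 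Your concerns about rigor are well placed: steps (a) and (b) in the paper are heuristic (an ``$\approx$'' under sparsity, and a bound $\mathrm{pos}\le M e^{\theta(\boldsymbol{u}_i,\boldsymbol{v}_i)}$ that does not hold for arbitrary embeddings), so the trade-off is that your sign argument is airtight but contentless while the paper's carries the intended MI-maximization meaning at the cost of those approximations. To match the paper you would need to supply exactly (a), (b), and the two data-processing steps you already anticipate.
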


\begin{proof}
See Appendix~\ref{app:proof}. 
\end{proof}

From Theorem~\ref{theorem:1}, we can see that maximizing $\mathcal{L}_{cont}$ is equivalent to maximizing a lower bound of the mutual information between raw node features and learned node representations, which guarantees model convergence~\cite{gca, theorem1, infonce2, theorem2, theorem3}. Furthermore, the lower bound derived by HomoGCL is stricter than that of the GRACE, which provides a theoretical basis for the performance boost of HomoGCL over the base model.

\begin{algorithm}[!tb]
\caption{HomoGCL (based on GRACE)}
\label{algorithm}
\flushleft{\textbf{Input}: $\mathcal{G}=(\mathbf{X},\mathbf{A})$}\\
\textbf{Parameters}: Number of clusters $k$, coefficient $\alpha$, GRACE-related hyperparameters \\
\textbf{Output}: trained GNN encoder $f_{\Theta}$ and node embeddings $\mathbf{H}$ \\

\begin{algorithmic}[1] 
\STATE Initialize GNN encoder $f_{\Theta}$
\WHILE{not converge}
\STATE Generate two augmentation functions $t_1$ and $t_2$
\STATE Generate two augmented graphs via $\mathcal{G}_1=t_1(\mathcal{G})$ and $\mathcal{G}_2=t_2(\mathcal{G})$
\STATE Obtain node embeddings $\mathbf{H},\mathbf{U},\mathbf{V}$ of $\mathcal{G},\mathcal{G}_1,\mathcal{G}_2$ using the same encoder $f_{\Theta}$
\STATE Perform $k$-means clustering on $\mathbf{H}$ and obtain centroids $\{\boldsymbol{c}_1,\boldsymbol{c}_2,\cdots,\boldsymbol{c}_k\}$
\STATE Calculate $p(\boldsymbol{c}_j \mid \boldsymbol{h}_i)$ for each node-centroid pair according to Eq.~\eqref{eq:gmm} and get the assignment matrix $\mathbf{R}$
\STATE Calculate saliency $\mathbf{S}$ for connected node pairs
\STATE Compute the contrastive loss $\mathcal{L}_{cont}$ via Eq.~\eqref{eq:infonceloss}~\eqref{eq:pos}~\eqref{eq:neg}~\eqref{eq:cont}
\STATE Compute the homophily loss $\mathcal{L}_{homo}$ via Eq.~\eqref{eq:homoloss}
\STATE Compute the final loss $\mathcal{J}$ via Eq.~\eqref{eq:loss}
\STATE Update the parameters of $f_{\Theta}$ via $\mathcal{J}$
\ENDWHILE
\STATE \textbf{return} $f_{\Theta}$, $\mathbf{H}$
\end{algorithmic}
\end{algorithm}

\subsection{Complexity Analysis}\label{sec:complexity}

The overview of the training algorithm of HomoGCL (based on GRACE) is elaborated in Algorithm~\ref{algorithm}, based on which we analyze its time and space complexity. It is worth mentioning that the extra calculation of HomoGCL introduces light computational overhead over the base model.

\subsubsection*{\textbf{Time Complexity}}
We analyze the time complexity according to the pseudocode. For line 6, the time complexity of $k$-means with $t$ iterations, $k$ clusters, $N$ node samples with $d^{\prime}$-dim hidden embeddings is $\mathcal{O}(tkNd^{\prime})$.  Obtaining $k$ cluster centroids needs $\mathcal{O}(Nd^{\prime})$ based on the hard pseudo-labels obtained by $k$-means. For line 7, we need to calculate the distance between each node and each cluster centroid, which is another $\mathcal{O}(kNd^{\prime})$ overhead to get the assignment matrix $\mathbf{R}$. For line 8, the saliency $\mathbf{S}$ can be obtained via $L_2$ norm and vector multiplication for connected nodes in $\mathcal{O}(k(N+|\mathcal{E}|))$. For line 10, the homophily loss can be calculated in $\mathcal{O}(k|\mathcal{E}|)$ by definition. Overall, the extra computational overhead over the base model is $\mathcal{O}(tkNd^{\prime}+k(N+|\mathcal{E}|))$ for HomoGCL, which is lightweight compared with the base model, as $k$ is usually set to a small number.

\subsubsection*{\textbf{Space Complexity}}
Extra space complexity over the base model for HomoGCL is introduced by the $k$-means algorithm, the assignment matrix $\mathbf{R}$, and the saliency $\mathbf{S}$. For the $k$-means algorithm mentioned above, its space complexity is $\mathcal{O}(Nd^{\prime})$. For $\mathbf{R}\in\mathbb{R}^{N\times k}$, its space complexity is $\mathcal{O}(kN)$. As only the saliency between each connected node pair will be leveraged, the saliency costs $\mathcal{O}(|\mathcal{E}|)$. Overall, the extra space complexity of HomoGCL is $\mathcal{O}((d^{\prime}+k)N+|\mathcal{E}|)$, which is lightweight and on par with the GNN encoder.

\section{Experiments}\label{sec:exp}

In this section, we evaluate the effectiveness of HomoGCL by answering the following research questions:
\begin{itemize}
\item[\textbf{RQ1:}] Does HomoGCL outperform existing baseline methods on node classification and node clustering? Can it consistently boost the performance of prevalent GCL frameworks?
\item[\textbf{RQ2:}] Can the saliency $\mathbf{S}$ distinguish the importance of neighbor nodes being positive?
\item[\textbf{RQ3:}] Is HomoGCL sensitive to hyperparameters?
\item[\textbf{RQ4:}] How to intuitively understand the representation capability of HomoGCL over the base model? 
\end{itemize}

\begin{table}[!t]
\centering
\begin{center}
\caption{Statistics of datasets used in the paper.}\label{table:datasets}
\vspace{-0.2cm}
\scalebox{0.95}{
\begin{tabular}{c|cccc}
\toprule
{Dataset} & {\#Nodes} & {\#Edges} & {\#Features} & {\#Classes} \\
\midrule
Cora             & 2,708        & 10,556          & 1,433               & 7 \\
CiteSeer         & 3,327        & 9,228          & 3,703               & 6 \\
PubMed           & 19,717       & 88,651         & 500                 & 3 \\
Photo            & 7,650        & 238,163        & 745                 & 8 \\
Computer         & 13,752       & 491,722        & 767                 & 10 \\
arXiv            & 169,343      & 1,166,243      & 128                 & 40 \\
\bottomrule
\end{tabular}
}
\end{center}
\vspace{-0.3cm}
\end{table}

\subsection{Experimental Setup}

\subsubsection*{\textbf{Datasets}}
We adopt six publicly available real-world benchmark datasets, including three citation networks Cora, CiteSeer, PubMed~\cite{cora}, two co-purchase networks Amazon-Photo (Photo), Amazon-Computers (Computer)~\cite{pitfall}, and one large-scale network ogbn-arXiv (arXiv)~\cite{ogb} to conduct the experiments throughout the paper. The statistics of the datasets are provided in Table~\ref{table:datasets}. We give their detailed descriptions are as follows:

\begin{itemize}
\item
\textbf{Cora}, \textbf{CiteSeer}, and \textbf{PubMed}\footnote{\url{https://github.com/kimiyoung/planetoid/raw/master/data}}~\cite{cora} are three academic networks where nodes represent papers and edges represent citation relations. Each node in Cora and CiteSeer is described by a 0/1-valued word vector indicating the absence/presence of the corresponding word from the dictionary, while each node in PubMed is described by a TF/IDF weighted word vector from the dictionary. The nodes are categorized by their related research area for the three datasets.
\item
\textbf{Amazon-Photo} and \textbf{Amazon-Computers}\footnote{\url{https://github.com/shchur/gnn-benchmark/raw/master/data/npz}}~\cite{pitfall} are two co-purchase networks constructed from Amazon where nodes represent products and edges represent co-purchase relations.
Each node is described by a raw bag-of-words feature encoding product reviews, and is labeled with its category.
\item
\textbf{ogbn-arXiv}\footnote{\url{https://ogb.stanford.edu/docs/nodeprop/\#ogbn-arxiv}}~\cite{ogb} is a citation network between all Computer Science arXiv papers indexed by Microsoft academic graph~\cite{mag}, where nodes represent papers and edges represent citation relations. Each node is described by a 128-dimensional feature vector obtained by averaging the skip-gram word embeddings in its title and abstract. The nodes are categorized by their related research area.
\end{itemize}

\subsubsection*{\textbf{Baselines}}
We compare HomoGCL with a variety of baselines, including unsupervised methods Node2Vec~\cite{node2vec} and DeepWalk~\cite{deepwalk}, supervised methods GCN~\cite{gcn}, GAT~\cite{gat}, and GraphSAGE~\cite{graphsage}, graph autoencoders GAE and VGAE~\cite{gae}, graph contrastive learning methods including DGI~\cite{dgi}, HDI~\cite{hdmi}, GMI~\cite{gmi}, InfoGCL~\cite{infogcl}, MVGRL~\cite{mvgrl}, G-BT~\cite{gbt}, BGRL~\cite{bgrl}, AFGRL~\cite{afgcl}, CCA-SSG~\cite{cca}, COSTA~\cite{costa}, GRACE~\cite{grace}, GCA~\cite{gca}, ProGCL~\cite{progcl}, ARIEL~\cite{ariel}, and gCooL~\cite{gcool}. We also report the performance obtained using an MLP classifier on raw node features. 
The detailed description of the baselines could be found in Appendix~\ref{app:baseline}.

\begin{table*}[!t]
\centering
\begin{center}
\caption{Node classification results (accuracy(\%) $\pm$std) for 5 runs on five real-world datasets. The best results are highlighted in \textbf{boldface}. $\mathbf{X}$, $\mathbf{A}$, and $\mathbf{Y}$ correspond to node features, graph adjacency matrix, and node labels respectively. ``\green{$\uparrow$}'' and ``\red{$\downarrow$}'' refer to performance improvement
and drop compared with the same GRACE base model.}\label{table:classification}
\scalebox{1.0}{
\begin{threeparttable}
\begin{tabular}{lccccccc}
\toprule[1pt]
{Model} & {Training Data} & {Cora} & {CiteSeer} & {PubMed} & {Photo} &{Computer}\\
\midrule[0.5pt]
Raw features & $\mathbf{X}, \mathbf{Y}$ & 47.7$\pm$0.4 & 46.5$\pm$0.4 & 71.4$\pm$0.2           & 72.27$\pm$0.00 & 73.81$\pm$0.00 \\
DeepWalk & $\mathbf{A}$ & 70.7$\pm$0.6 & 51.4$\pm$0.5 & 74.3$\pm$0.9 & 89.44$\pm$0.11 & 85.68$\pm$0.06 \\
Node2Vec & $\mathbf{A}$ & 70.1$\pm$0.4 & 49.8$\pm$0.3 & 69.8$\pm$0.7           & 87.76$\pm$0.10 & 84.39$\pm$0.08 \\
GCN & $\mathbf{X, A, Y}$ & 81.5$\pm$0.4 & 70.2$\pm$0.4 & 79.0$\pm$0.2 & 92.42$\pm$0.22 & 86.51$\pm$0.54 \\
GAT & $\mathbf{X, A, Y}$ & 83.0$\pm$0.7 & 72.5$\pm$0.7 & 79.0$\pm$0.3 & 92.56$\pm$0.35 & 86.93$\pm$0.29 \\
\midrule[0.5pt]
GAE & $\mathbf{X, A}$ & 71.5$\pm$0.4 & 65.8$\pm$0.4 & 72.1$\pm$0.5 & 91.62$\pm$0.13 & 85.27$\pm$0.19 \\
VGAE & $\mathbf{X, A}$ & 73.0$\pm$0.3 & 68.3$\pm$0.4 & 75.8$\pm$0.2 & 92.20$\pm$0.11 & 86.37$\pm$0.21 \\
DGI & $\mathbf{X, A}$ & 82.3$\pm$0.6 & 71.8$\pm$0.7 & 76.8$\pm$0.6 & 91.61$\pm$0.22 & 83.95$\pm$0.47 \\
GMI & $\mathbf{X, A}$ & 83.0$\pm$0.3 & 72.4$\pm$0.1 & 79.9$\pm$0.2 & 90.68$\pm$0.17 & 82.21$\pm$0.31 \\
InfoGCL & $\mathbf{X, A}$ & 83.5$\pm$0.3 & \textbf{73.5}$\pm$0.4 & 79.1$\pm$0.2 &-&-\\
MVGRL & $\mathbf{X, A}$ & 83.5$\pm$0.4 & 73.3$\pm$0.5 & 80.1$\pm$0.7 & 91.74$\pm$0.07 & 87.52$\pm$0.11 \\
BGRL & $\mathbf{X, A}$ & 82.7$\pm$0.6 & 71.1$\pm$0.8 & 79.6$\pm$0.5 & 92.80$\pm$0.08 & 88.23$\pm$0.11 \\
AFGRL & $\mathbf{X, A}$ & 79.8$\pm$0.2 & 69.4$\pm$0.2 & 80.0$\pm$0.1 & 92.71$\pm$0.23 & 88.12$\pm$0.27 \\
COSTA & $\mathbf{X, A}$ & 82.2$\pm$0.2 & 70.7$\pm$0.5 & 80.4$\pm$0.3 & 92.43$\pm$0.38 & 88.37$\pm$0.22 \\
CCA-SSG & $\mathbf{X, A}$ & 84.0$\pm$0.4 & 73.1$\pm$0.3 & 81.0$\pm$0.4 & 92.84$\pm$0.18 & 88.27$\pm$0.32 \\
\midrule[0.5pt]
GRACE & $\mathbf{X, A}$ & 81.5$\pm$0.3 & 70.6$\pm$0.5 & 80.2$\pm$0.3 & 92.15$\pm$0.24 & 86.25$\pm$0.25 \\
GCA & $\mathbf{X, A}$ & 81.4$\pm$0.3(\red{$\downarrow$}0.1) & 70.4$\pm$0.4(\red{$\downarrow$}0.2) & 80.7$\pm$0.5(\green{$\uparrow$}0.5) & 92.53$\pm$0.16(\green{$\uparrow$}0.38) & 87.80$\pm$0.23(\green{$\uparrow$}1.55) \\
ProGCL & $\mathbf{X, A}$ & 81.2$\pm$0.4(\red{$\downarrow$}0.3) & 69.8$\pm$0.5(\red{$\downarrow$}0.8) & 79.2$\pm$0.2(\red{$\downarrow$}1.0) & 92.39$\pm$0.11(\green{$\uparrow$}0.24) & 87.43$\pm$0.21(\green{$\uparrow$}1.18) \\
ARIEL & $\mathbf{X, A}$ & 83.0$\pm$1.3(\green{$\uparrow$}1.5) & 71.1$\pm$0.9(\green{$\uparrow$}0.5) & 74.2$\pm$0.8(\red{$\downarrow$}6.0) & 91.80$\pm$0.24(\red{$\downarrow$}0.35) & 87.07$\pm$0.33(\green{$\uparrow$}0.82) \\
\rowcolor{light-gray}\textbf{HomoGCL} & $\mathbf{X, A}$ & \textbf{84.5}$\pm$0.5(\green{$\uparrow$}3.0) & 72.3$\pm$0.7(\green{$\uparrow$}1.7) & \textbf{81.1}$\pm$0.3(\green{$\uparrow$}0.9) & \textbf{92.92}$\pm$0.18(\green{$\uparrow$}0.77) & \textbf{88.46}$\pm$0.20(\green{$\uparrow$}2.21) \\
\bottomrule[1pt]
\end{tabular}
\begin{tablenotes}
\item[1] The results not reported are due to unavailable code.
\end{tablenotes}
\end{threeparttable}
}
\end{center}
\end{table*}

\subsubsection*{\textbf{Configurations and Evaluation protocol}}
Following previous work~\cite{gmi}, each model is firstly trained in an unsupervised manner using the entire graph, after which the learned embeddings are utilized for downstream tasks. We use the Adam optimizer for both the self-supervised GCL training and the evaluation stages. The graph encoder $f_{\Theta}$ is specified as a standard two-layer GCN model by default for all the datasets.
For the node classification task, we train a simple $L_2$-regularized one-layer linear classifier. For Cora, CiteSeer, and PubMed, we apply public splits~\cite{cora} to split them into training/validation/test sets, where only 20 nodes per class are available during training. For Photo and Computer, we randomly split them into training/validation/testing sets with proportions 10\%/10\%/80\% respectively following~\cite{gca}, since there are no publicly accessible splits. We train the model for five runs and report the performance in terms of accuracy.
For the node clustering task, we train a $k$-means model on the learned embeddings for 10 times, where the number of clusters is set to the number of classes for each dataset. We measure the clustering performance in terms of two prevalent metrics Normalized Mutual Information (NMI) score: $\mathrm{NMI}=2I(\mathbf{\hat{Y}};\mathbf{Y})/[H(\mathbf{\hat{Y}})+H(\mathbf{Y})]$, where $\mathbf{\hat{Y}}$ and $\mathbf{Y}$ being the predicted cluster indexes and class labels respectively, $I(\cdot)$ being the mutual information, and $H(\cdot)$ being the entropy; and Adjusted Rand Index (ARI): $\mathrm{ARI}=\mathrm{RI}-\mathbb{E}[\mathrm{RI}]/(\max\{\mathrm{RI}\}-\mathbb{E}[\mathrm{RI}])$, where $\mathrm{RI}$ being the Rand Index~\cite{rand}.

\subsection{Node Classification (RQ1)}

We implement HomoGCL based on GRACE.
The experimental results of node classification on five datasets are shown in Table~\ref{table:classification}, from which we can see that HomoGCL outperforms all self-supervised baselines or even the supervised ones, over the five datasets except on CiteSeer, which we attribute to its relatively low homophily. We can also observe that GRACE-based methods GCA, ProGCL, and ARIEL cannot bring consistent improvements over GRACE. In contrast, HomoGCL can always yield significant improvements over GRACE, especially on Cora with a 3\% gain.

We also find CCA-SSG a solid baseline, which can achieve runner-up performance on these five datasets. It is noteworthy that CCA-SSG adopts simple edge dropping and feature masking as augmentation (like HomoGCL), while the performances of baselines with elaborated augmentation (MVGRL, COSTA, GCA, ARIEL) vary from datasets. It indicates that data augmentation in GCL might be overemphasized. 

\begin{table}[!t]
\centering
\begin{center}
\caption{Node clustering results in terms of NMI and ARI on Photo and Computer datasets, where HomoGCL is implemented based on GRACE. $\Delta_x=0.01x$ is used to denote the standard deviation.}\label{table:clustering}
\scalebox{1.00}{
\begin{tabular}{l|cc|cc}
\toprule
Dataset & \multicolumn{2}{|c|}{Photo} & \multicolumn{2}{|c}{Computer}\\
\midrule
Metric & NMI & ARI & NMI & ARI \\
\midrule
GAE & 0.616$\pm\Delta_1$ & 0.494$\pm\Delta_1$ & 0.441$\pm\Delta_0$ & 0.258$\pm\Delta_0$ \\
VGAE & 0.530$\pm\Delta_4$ & 0.373$\pm\Delta_4$ & 0.423$\pm\Delta_0$ & 0.238$\pm\Delta_0$ \\
DGI & 0.376$\pm\Delta_3$ & 0.264$\pm\Delta_3$ & 0.318$\pm\Delta_2$ & 0.165$\pm\Delta_2$ \\
HDI & 0.429$\pm\Delta_1$ & 0.307$\pm\Delta_1$ & 0.347$\pm\Delta_1$ & 0.216$\pm\Delta_6$ \\
MVGRL & 0.344$\pm\Delta_4$ & 0.239$\pm\Delta_4$ & 0.244$\pm\Delta_0$ & 0.141$\pm\Delta_0$ \\
BGRL & 0.668$\pm\Delta_3$ & 0.547$\pm\Delta_4$ & 0.484$\pm\Delta_0$ & 0.295$\pm\Delta_0$ \\
AFGRL & 0.618$\pm\Delta_1$ & 0.497$\pm\Delta_3$ & 0.478$\pm\Delta_3$ & 0.334$\pm\Delta_4$ \\
GCA & 0.614$\pm\Delta_0$ & 0.494$\pm\Delta_0$ & 0.426$\pm\Delta_0$ & 0.246$\pm\Delta_0$ \\
gCooL & 0.632$\pm\Delta_0$ & 0.524$\pm\Delta_0$ & 0.474$\pm\Delta_2$ & 0.277$\pm\Delta_2$ \\
\rowcolor{light-gray}\textbf{HomoGCL} & \textbf{0.671}$\pm\Delta_2$ & \textbf{0.587}$\pm\Delta_2$ & \textbf{0.534}$\pm\Delta_0$ & \textbf{0.396}$\pm\Delta_0$ \\
\bottomrule
\end{tabular}
}
\end{center}
\end{table}

Finally, other methods which leverage graph homophily implicitly (AFGRL, ProGCL) do not perform as we expected. We attribute the non-ideal performance of AFGRL to the fact that it does not apply data augmentation, which might limit its representation ability. For ProGCL, since the model focuses on negative samples by alleviating false negative cases, it is not as effective as HomoGCL to directly expand positive ones.

\subsection{Node Clustering (RQ1)}\label{sec:cluster}

We also evaluate the node clustering performance on Photo and Computer datasets in this section. HomoGCL is also implemented based on GRACE. As shown in Table~\ref{table:clustering}, HomoGCL generally outperforms other methods by a large margin on both metrics for the two datasets. We attribute the performance to that by enlarging the connection density between node pairs far away from the estimated decision boundaries, HomoGCL can naturally acquire compact intra-cluster bonds, which directly benefits clustering. It is also validated by the visualization experiment, which will be discussed in Section~\ref{sec:visual}.

\begin{table}[!t]
\centering
\begin{center}
\caption{The performance of HomoGCL for boosting negative sample-free BGRL in terms of accuracy(\%).}\label{table:bgrl}
\scalebox{1.00}{
\begin{tabular}{l|ccc}
\toprule
Model & PubMed & Photo & Computer \\
\midrule
BGRL & 79.6 & 92.80 & 88.23 \\
\rowcolor{light-gray}\quad+\textbf{HomoGCL} & 80.8(\green{$\uparrow$}1.2) & 93.53(\green{$\uparrow$}0.73) & 90.01(\green{$\uparrow$}1.79) \\
\bottomrule
\end{tabular}
}
\end{center}
\end{table}

\subsection{Improving Various GCL Methods (RQ1)}\label{sec:plusbgrl}

As a patch to expand positive pairs, it is feasible to combine HomoGCL with other GCL methods, even negative-free ones like BGRL~\cite{bgrl}, which adapts BYOL~\cite{byol} in computer vision to GCL to free contrastive learning from numerous negative samples.

\subsubsection*{\textbf{Sketch of BGRL}} 
An online encoder $f_{\xi}$ and a target encoder $f_{\phi}$ are leveraged to encoder two graph views $\mathcal{G}_1$ and $\mathcal{G}_2$ respectively, after which we can get $\tilde{\mathbf{H}}_1$ and $\tilde{\mathbf{H}}_2$. An additional predictor $p_{\xi}$ is applied to $\tilde{\mathbf{H}}_1$ and we can get $\tilde{\mathbf{Z}}_1$. The loss function is defined as 
\begin{equation}
\ell_1=\frac{1}{N} \sum_{i=1}^{N}\theta^{\prime}\left(\tilde{\mathbf{Z}}_{(1, i)}, \tilde{\mathbf{H}}_{(2, i)}\right),
\end{equation} 
where $\theta^{\prime}(\cdot,\cdot)$ is a similarity function. A symmetric loss $\tilde{\ell}_1$ is obtained by feeding $\mathcal{G}_1$ into the target encoder and $\mathcal{G}_2$ into the online encoder, and the final objective
is $\mathcal{L}_1=\ell_1+\tilde{\ell}_1$. 
For online encoder $f_{\xi}$, its parameters are updated via stochastic gradient descent, while the parameters of the target encoder $f_{\phi}$ are updated via exponential moving average (EMA) of $\theta$ as $\phi \leftarrow \tau \phi+(1-\tau) \xi$.

To combine BGRL with HomoGCL, we feed the initial graph $\mathcal{G}$ to the online encoder and get $\mathbf{H}$. Then we get the assignment matrix $\mathbf{R}$ via Eq.~\eqref{eq:gmm} and the saliency $\mathbf{S}$ via multiplication. The expanded loss can thus be defined as 
\begin{equation}
\ell_2=\frac{1}{|\mathcal{E}|}\sum_{(v_i,v_j)\in\mathcal{E}}\theta^{\prime}\left(\tilde{\mathbf{Z}}_{(1, i)},\tilde{\mathbf{H}}_{(2, j)}\right)\mathbf{S}_{ij}.
\end{equation}
We can also get a symmetric loss $\tilde{\ell_2}$, and the entire expanded loss is $\mathcal{L}_2=\ell_2+\tilde{\ell}_2$. Together with the homophily loss $\mathcal{L}_{homo}$ defined in Eq.~\eqref{eq:homoloss}, we can get the overall loss for BGRL-based HomoGCL as
\begin{equation}
\mathcal{J}=\mathcal{L}_1+\alpha\mathcal{L}_{homo}+\beta\mathcal{L}_2,
\end{equation}
where $\beta$, $\alpha$ are two hyperparameters.

We evaluate the performance of BGRL+HomoGCL on PubMed, Photo, and Computer. 
The results are shown in Table~\ref{table:bgrl}, from which we can see that HomoGCL brings consistent improvements over the BGRL base. It verifies that HomoGCL is model-agnostic and can be applied to GCL models in a plug-and-play way to boost their performance. Moreover, it is interesting to see that the performances on Photo and Computer even surpass GRACE+HomoGCL as we reported in Table~\ref{table:classification}, which shows the potential of HomoGCL to further boost the performance of existing GCL methods.

\begin{table}[!t]
\centering
\begin{center}
\caption{Node classification results on ogbn-arXiv dataset (accuracy(\%) $\pm$std). The results of baselines are quoted from published reports. OOM indicates out-of-memory.
}\label{table:ogbn}
\scalebox{1}{
\begin{tabular}{lcc}
\toprule
Model & Validation & Test \\
\midrule
MLP & 57.65$\pm$0.12 & 55.50$\pm$0.23 \\
node2vec & 71.29$\pm$0.13 & 70.07$\pm$0.13 \\
GCN & 73.00$\pm$0.17 & 71.74$\pm$0.29 \\
GraphSAGE & 72.77$\pm$0.16 & 71.49$\pm$0.27 \\
\midrule
Random-Init & 69.90$\pm$0.11 & 68.94$\pm$0.15 \\
DGI & 71.26$\pm$0.11 & 70.34$\pm$0.16 \\
G-BT & 71.16$\pm$0.14 & 70.12$\pm$0.18 \\
GRACE full-graph & OOM & OOM \\
GRACE-Subsampling ($k$=2) & 60.49$\pm$3.72 & 60.24$\pm$4.06 \\
GRACE-Subsampling ($k$=8) & 71.30$\pm$0.17 & 70.33$\pm$0.18 \\
GRACE-Subsampling ($k$=2048) & 72.61$\pm$0.15 & 71.51$\pm$0.11 \\
ProGCL & 72.45$\pm$0.21 & 72.18$\pm$0.09 \\
\midrule
BGRL & 72.53$\pm$0.09 & 71.64$\pm$0.12 \\
\rowcolor{light-gray}\textbf{HomoGCL} & \textbf{72.85}$\pm$0.10 & \textbf{72.22}$\pm$0.15 \\
\bottomrule
\end{tabular}
}
\vspace{-0.2cm}
\end{center}
\end{table}

\subsection{Results on Large-Scale Dataset (RQ1)}

We also conduct an experiment on a large-scale dataset arXiv.
As the dataset is split based on the publication dates of the papers, i.e., the training set is papers published until 2017, the validation set is papers published in 2018, and the test set is papers published since 2019, we report the classification accuracy on both the validation and the test sets, which is a convention for this task. We extend the backbone GNN encoder to 3 GCN layers, as suggested in~\cite{bgrl, progcl}. The results are shown in Table~\ref{table:ogbn}.

Since GRACE treats all other nodes as negative samples, scaling GRACE to the large-scale dataset suffers from the OOM issue. Subsampling $k$ nodes randomly across the graph as negative samples for each node is a feasible countermeasure~\cite{bgrl}, but it is sensitive to the negative size $k$. BGRL, on the other hand, is free from negative samples, which makes it scalable by design, and it shows a great tradeoff between performance and complexity. Since the space complexity of HomoGCL mainly depends on the performance of the base model as discussed in Section~\ref{sec:complexity}, we implement HomoGCL based on BGRL as described in Section~\ref{sec:plusbgrl}. We can see that BGRL+HomoGCL boosts the performance of BGRL on both validation and test sets. Furthermore, it outperforms all other compared baselines, which shows its effectiveness and efficiency.

\subsection{Case Study (RQ2)}

\begin{figure}[!t]
\centering
\includegraphics[width=\linewidth]{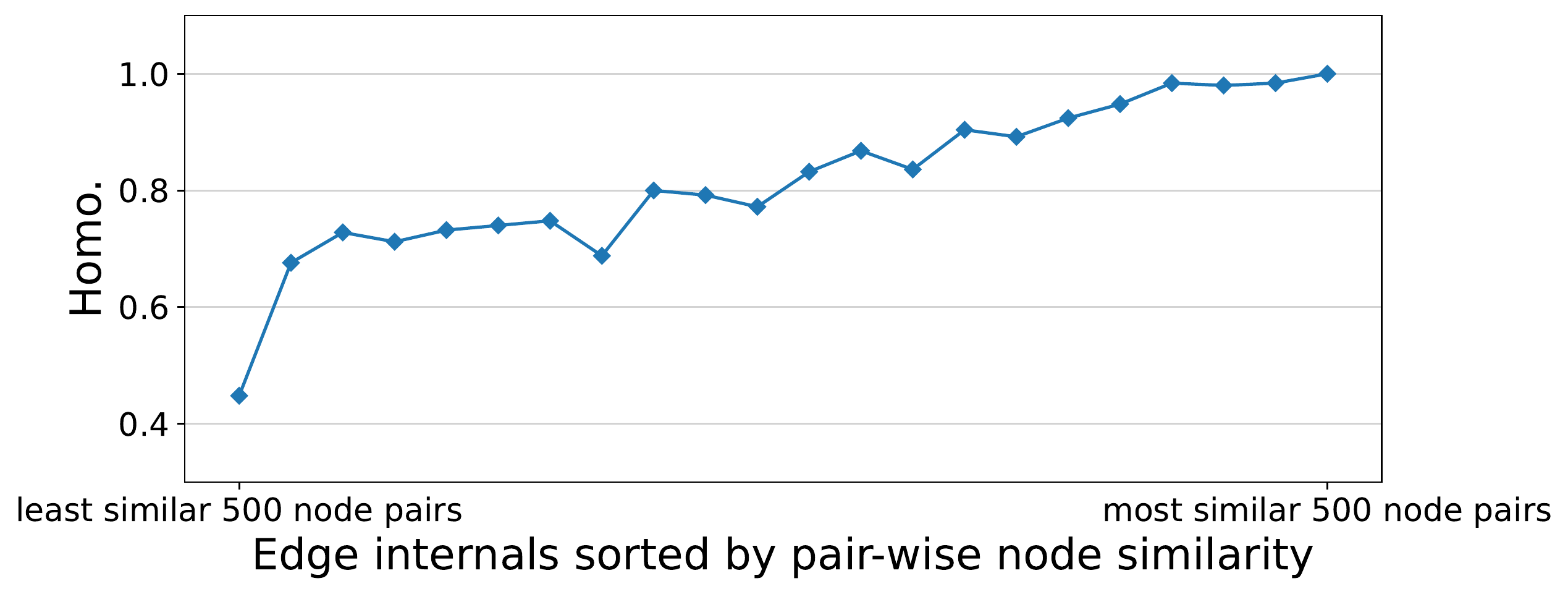}
\centering
\caption{Case study on Cora. The saliency $\mathbf{S}$ can effectively estimate the probability of neighbor nodes being positive as more salient edges (more similar node pairs) tend to have larger homophily.}
\label{fig:case}
\vspace{-0.2cm}
\end{figure}

To obtain an in-depth understanding for the mechanism of the saliency $\mathbf{S}$ in distinguishing the importance of neighbor nodes being positive, we conduct this case study. Specifically, we first sort all edges according to the learned saliency $\mathbf{S}$, then divide them into intervals of size 500 to calculate the homophily in each interval. From Figure~\ref{fig:case} we can see that the saliency can estimate the probability of neighbor nodes being positive properly as more similar node pairs in $\mathbf{S}$ tend to have larger homophily, which validates the effectiveness of leveraging saliency in HomoGCL.

\subsection{Hyperparameter Analysis (RQ3)}

In Figure~\ref{fig:hyper}, we conduct a hyperparameter analysis on the number of clusters and the weight coefficient $\alpha$ in Eq.~\eqref{eq:loss} on the Cora dataset. From the figures we can see that the performance is stable w.r.t. the cluster number. We attribute this to the soft class assignments used in identifying the decision boundary, as the pairwise saliency is mainly affected by the relative distance between node pairs, which is less sensitive to the number of clusters. The performance is also stable when $\alpha$ is below 10 (i.e., contrastive loss in Eq.~\eqref{eq:infonceloss} and homophily loss in Eq.~\eqref{eq:homoloss} are on the same order of magnitude). This shows that HomoGCL is parameter-insensitive, thus facilitating it to be combined with other GCL methods in a plug-and-play way flexibly. In practice, we tune the number of clusters in $\{5,10,15,20,25,30\}$ and simply assign $\alpha$ to 1 to balance the two losses without tuning.

\begin{figure}[!t]
\centering
\subfigure
{
\centering
\includegraphics[width=0.47\linewidth]{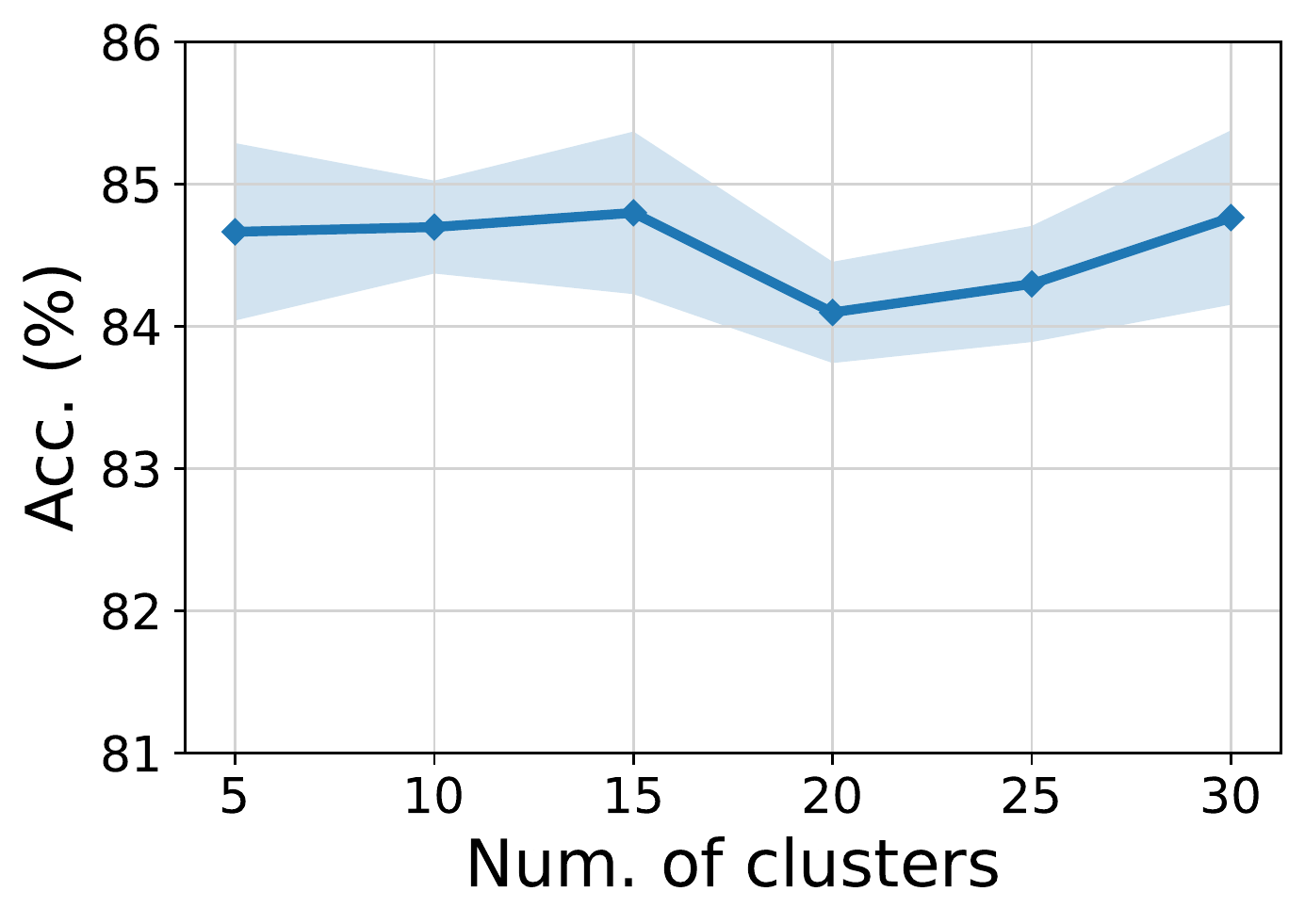}
}
\subfigure
{
\centering
\includegraphics[width=0.47\linewidth]{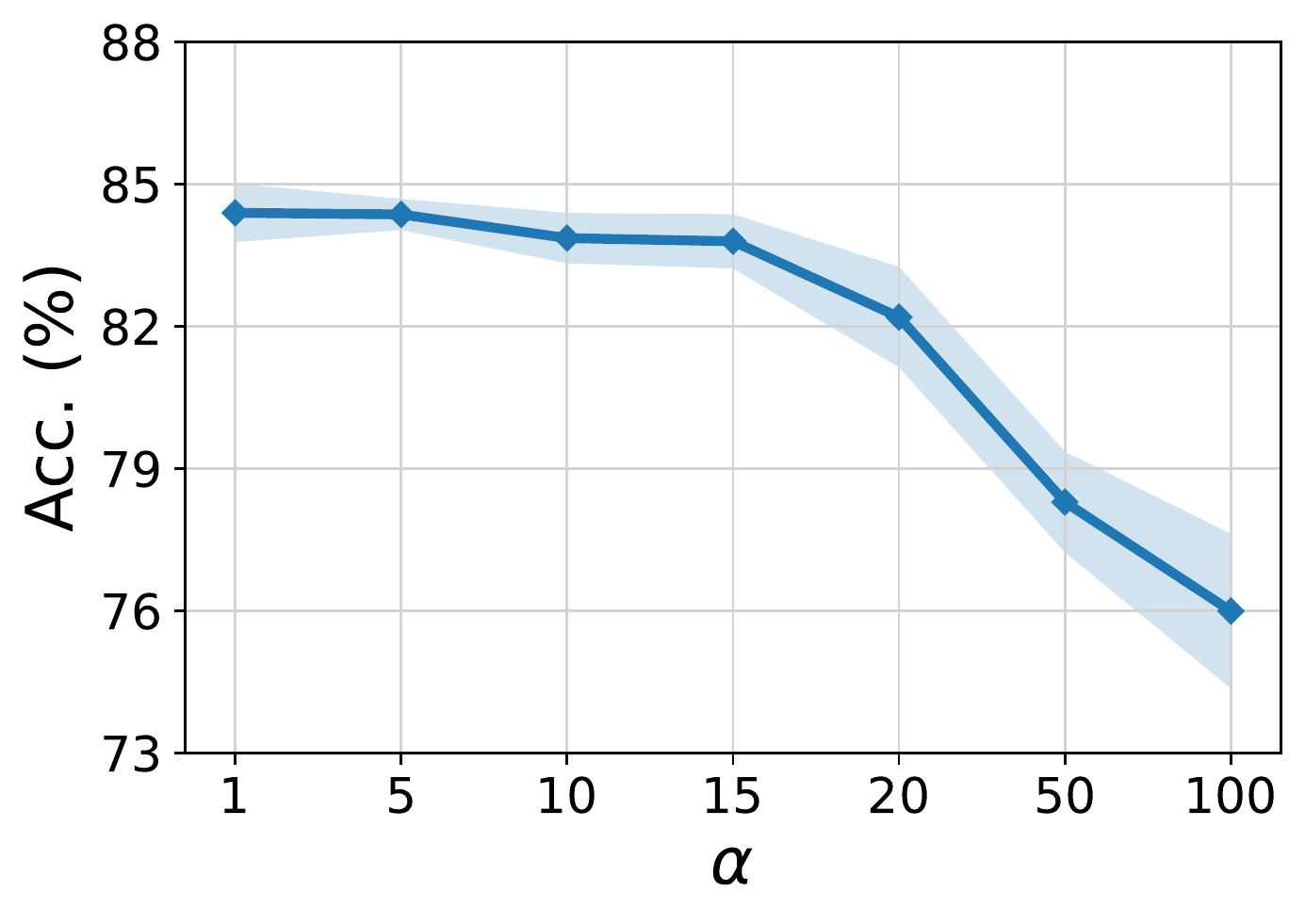}
}
\centering
\vspace{-0.2cm}
\caption{Hyperparameter analysis on the number of clusters and weight coefficient $\alpha$ on Cora.}
\label{fig:hyper}
\vspace{-0.3cm}
\end{figure}

\begin{figure}[!t]
\centering
\subfigure[BGRL]
{
\centering
\includegraphics[width=0.47\linewidth]{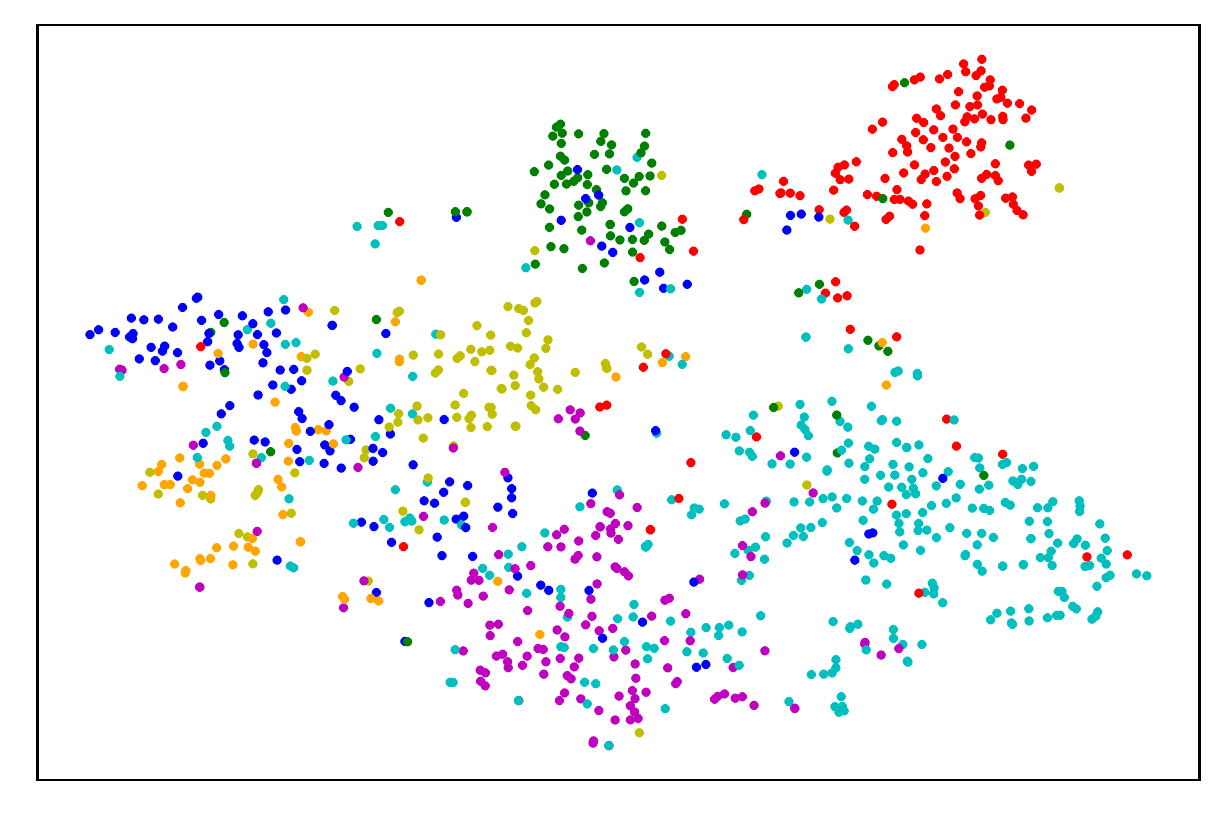}
}
\subfigure[GRACE]
{
\centering
\includegraphics[width=0.47\linewidth]{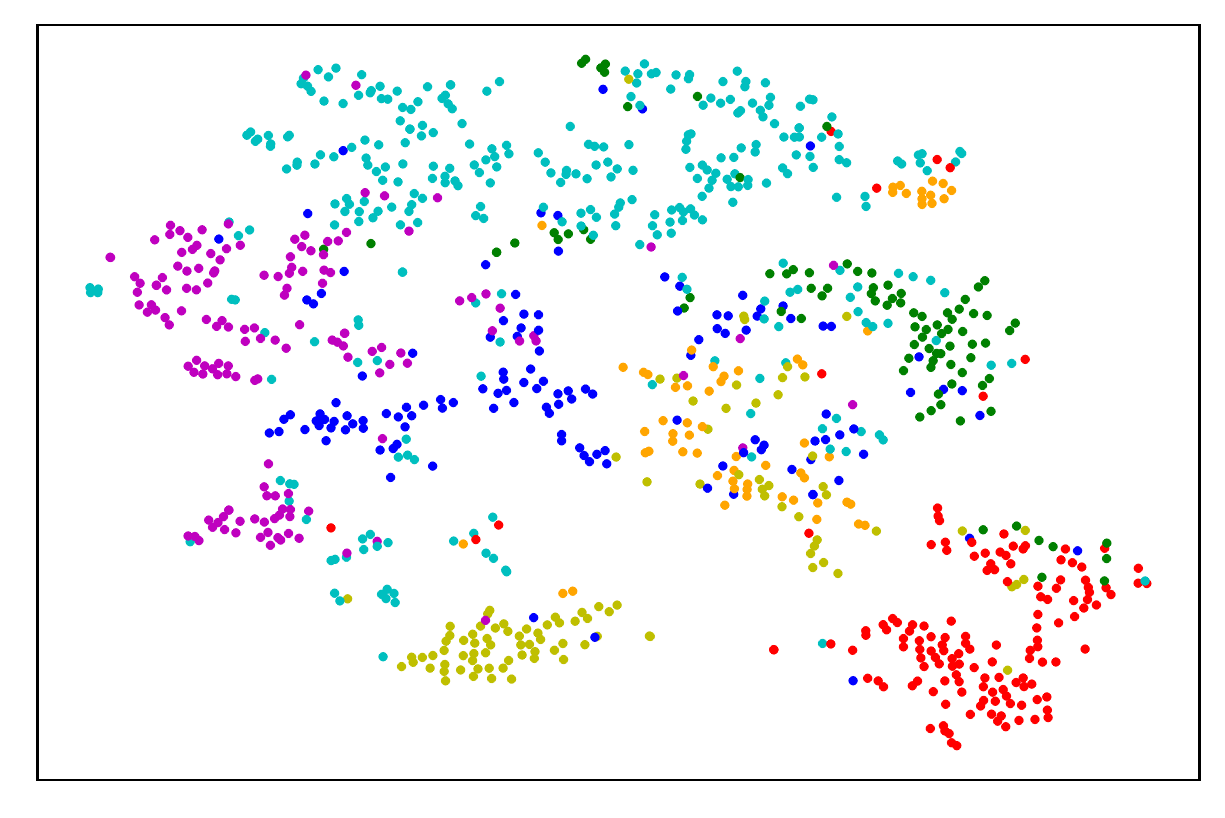}
}
\subfigure[BGRL + HomoGCL]
{
\centering
\includegraphics[width=0.47\linewidth]{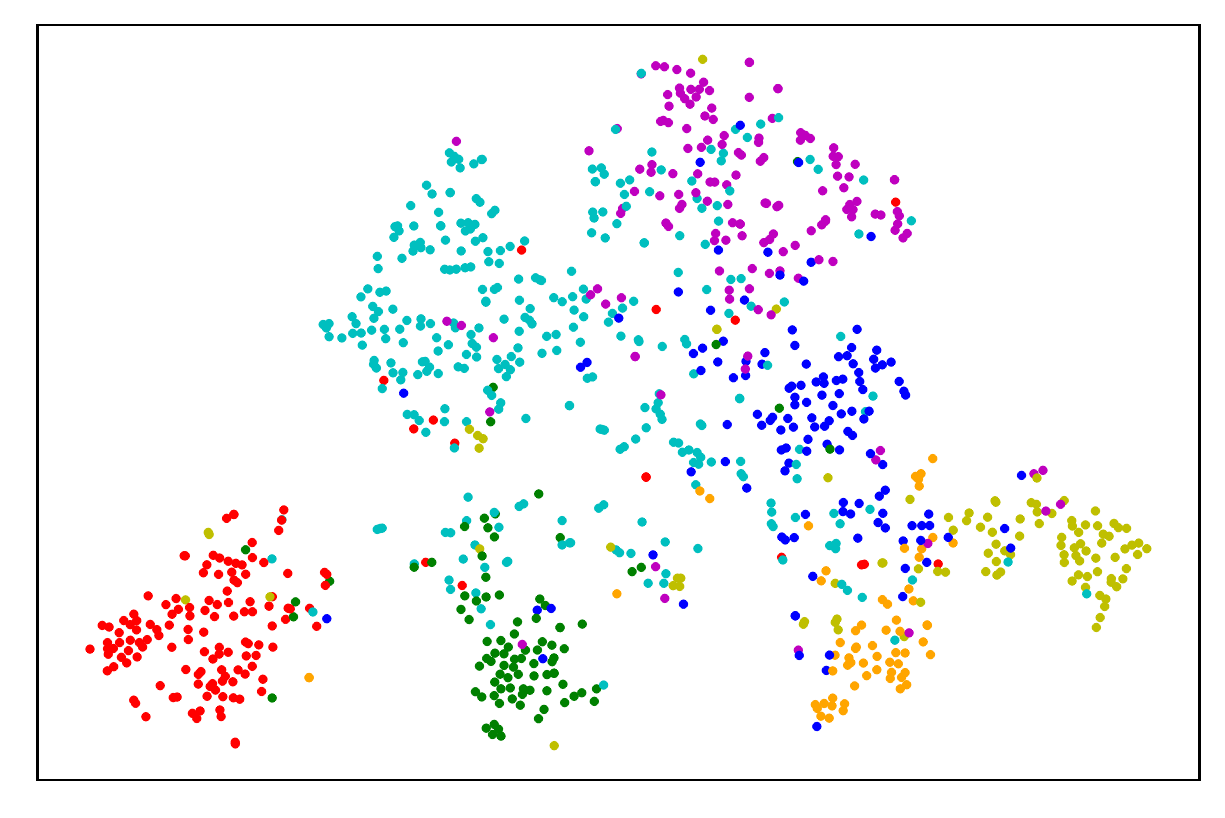}
}
\subfigure[GRACE + HomoGCL]
{
\centering
\includegraphics[width=0.47\linewidth]{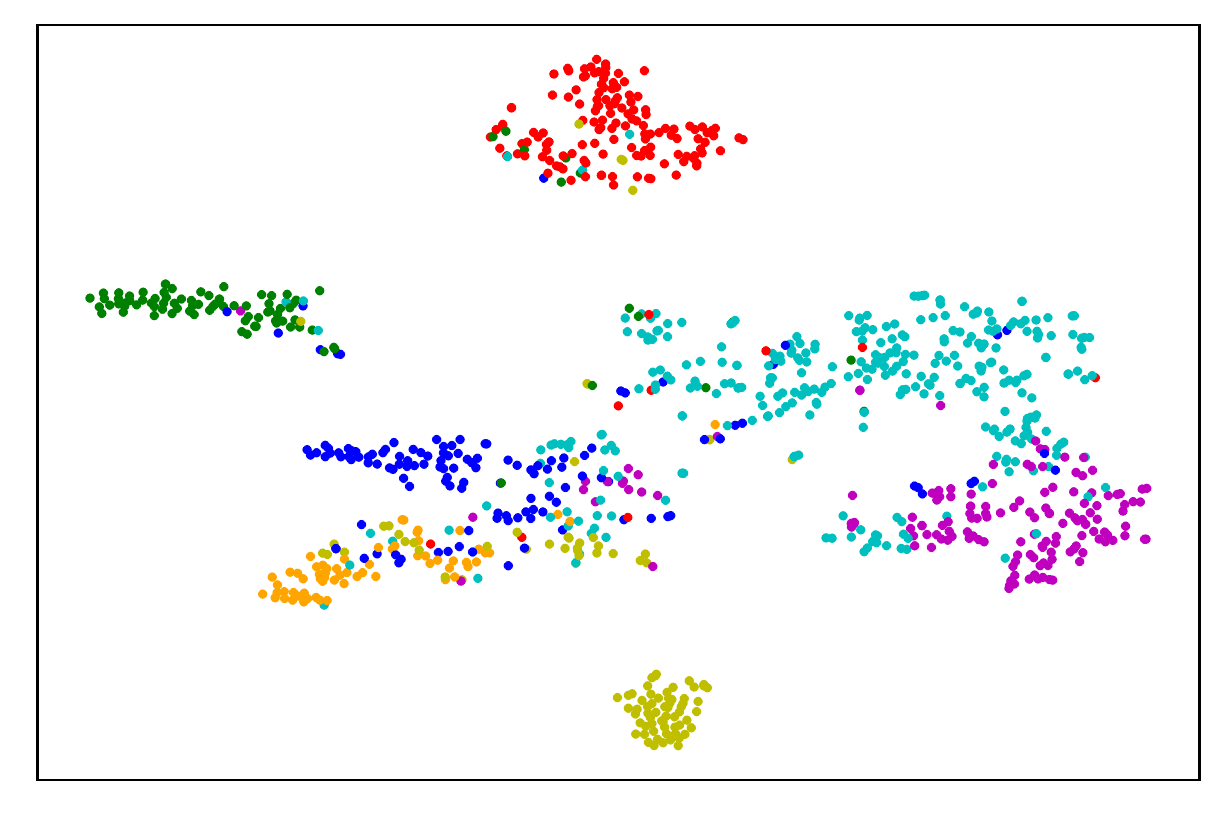}
}
\centering
\vspace{-0.2cm}
\caption{Visualization of node embeddings on Cora via t-SNE. Each node is colored by its label.}\label{fig:tsne}
\vspace{-0.3cm}
\end{figure}

\subsection{Visualization (RQ4)}\label{sec:visual}

In addition to quantitative analysis, we also visualize the embeddings learned by BGRL in Figure~\ref{fig:tsne}(a), GRACE in Figure~\ref{fig:tsne}(b), BGRL+ HomoGCL in Figure~\ref{fig:tsne}(c), and GRACE+HomoGCL in Figure~\ref{fig:tsne}(d) on the Cora dataset using t-SNE~\cite{tsne}. Here, each point represents a node and is colored by its label. We observe that the embeddings learned by ``+HomoGCL'' counterparts generally possess clearer class boundaries and compact intra-class structures, which shows the effectiveness of HomoGCL intuitively. This observation aligns with the remarkable node clustering performance reported in Section~\ref{sec:cluster}, which shows the superiority of HomoGCL.

\section{Conclusions}\label{sec:conc}

In this paper, we investigate why graph contrastive learning can perform decently when data augmentation is not leveraged and argue that graph homophily plays a key role in GCL. We thus devise HomoGCL to directly leverage homophily by estimating the probability of neighbor nodes being positive via Gaussian Mixture Model.
Furthermore, HomoGCL is model-agnostic and thus can be easily combined with existing GCL methods in a plug-and-play way to further boost their performances with a theoretical foundation.
Extensive experiments show that HomoGCL can consistently outperform state-of-the-art GCL methods in node classification and node clustering tasks on six benchmark datasets. 

\begin{acks}
The authors would like to thank Ying Sun from The Hong Kong University of Science and Technology (Guangzhou) for her insightful discussion.
This work was supported by NSFC (62276277), Guangdong Basic  Applied Basic Research Foundation (2022B1515120059), and the Foshan HKUST Projects (FSUST21-FYTRI01A, FSUST21-FYTRI02A).
Chang-Dong Wang and Hui Xiong are the corresponding authors.
\end{acks}

\bibliographystyle{ACM-Reference-Format}
\balance
\bibliography{sample-base}

\clearpage
\appendix

\section*{Appendix}

\setcounter{equation}{0}
\renewcommand\theequation{A.\arabic{equation}}

\section{Main Notations}\label{app:notation}

The main notions used the paper are elaborated in Table~\ref{table:notationtable}.

\begin{table}[!h]
\caption{Summary of the main notations used in the paper.}\label{table:notationtable}
\begin{center}
\begin{tabular}{ll}
\toprule[1pt]
{Symbol} &{Description}\\
\midrule[0.5pt]
$\mathbf{M}$, $\boldsymbol{m}_i$ & matrix, $i$-th row of the matrix \\
$\mathcal{G}$ & input graph \\
$\mathcal{V}$ & node set of the graph \\
$\mathcal{E}$ & edge set of the graph \\
$N=|\mathcal{V}|$ & number of nodes in the graph \\
$\mathbf{X}\in\mathbb{R}^{N\times d}$ & node features of the graph \\
$\mathbf{A}\in\mathbb{R}^{N\times N}$ & adjacency matrix of the graph \\
$\mathbf{Y}\in\mathbb{R}^{N}$ & node labels of the graph \\
$t_1$, $t_2$ & graph augmentation functions \\
$\mathcal{G}_1$, $\mathcal{G}_2$ & augmented graphs \\
$\mathcal{N}_1(i)$ , $\mathcal{N}_2(i)$& node sets of $v_i$'s neighbors in $\mathcal{G}_1$, $\mathcal{G}_2$ \\
$f_{\Theta}(\mathbf{X},\mathbf{A})$ & GNN encoder with parameter $\Theta$ \\
$\mathbf{H},\mathbf{U},\mathbf{V}\in\mathbb{R}^{N\times d^{\prime}}$ & encoded embeddings of $\mathcal{G},\mathcal{G}_1,\mathcal{G}_2$ \\
$k$ & number of clusters \\
$\mathbf{C}\in\mathbb{R}^{k}$ & cluster centroids \\
$\mathbf{R}\in\mathbb{R}^{N\times k}$ & cluster assignment matrix \\
$\mathbf{S}\in\mathbb{R}^{N\times N}$ & saliency \\
$\theta(\cdot,\cdot)$ & similarity function \\
$\tau$ & temperature parameter \\
$\alpha$ & coefficient of loss functions \\
\bottomrule
\end{tabular}
\end{center}
\vspace{-0.3cm}
\end{table}

\section{Detailed Proofs}\label{app:proof}

W preasent the proof of \textsc{Theorem}~\ref{theorem:1}:
\textit{The newly proposed contrastive loss $\mathcal{L}_{cont}$ in Eq.~\eqref{eq:cont} is a stricter lower bound of MI between raw node features $\mathbf{X}$ and node embeddings $\mathbf{U}$ and $\mathbf{V}$ in two augmented views, comparing with the raw contrastive loss $\mathcal{L}$ in Eq.~\eqref{eq:grace} proposed by GRACE. Formally,}
\begin{equation}
\mathcal{L} \leq \mathcal{L}_{cont} \leq I(\mathbf{X};\mathbf{U},\mathbf{V}).
\end{equation}

\begin{proof}[Proof for the first inequality] 
We first prove $\mathcal{L} \leq \mathcal{L}_{cont}$.

Each element in the saliency satisfies $\mathbf{S}_{ij}\in[0,1]$. Note that
\begin{equation}
e^{\theta(\boldsymbol{u}_{i}, \boldsymbol{v}_{i}) / \tau}\leq 
e^{\theta(\boldsymbol{u}_i,\boldsymbol{v}_i)/ \tau}+\sum\nolimits_{j\in \mathcal{N}_{\boldsymbol{u}}(i)}e^{\theta(\boldsymbol{u}_i,\boldsymbol{u}_j)/\tau}\cdot \mathbf{S}_{ij},
\end{equation}
and 
\begin{equation}
\begin{aligned}
&e^{\theta(\boldsymbol{u}_{i}, \boldsymbol{v}_{i}) / \tau}+\sum\nolimits_{j \neq i} e^{\theta(\boldsymbol{u}_{i}, \boldsymbol{v}_{j}) / \tau}+\sum\nolimits_{j \neq i} e^{\theta(\boldsymbol{u}_{i}, \boldsymbol{u}_{j}) / \tau} \geq \\
& \qquad e^{\theta(\boldsymbol{u}_i,\boldsymbol{v}_i)/ \tau}+\sum\nolimits_{j\in \mathcal{N}_{\boldsymbol{u}}(i)}e^{\theta(\boldsymbol{u}_i,\boldsymbol{u}_j)/\tau}\cdot \mathbf{S}_{ij}\\
&+\sum\nolimits_{j \notin \{i\cup\mathcal{N}_{\boldsymbol{v}}(i)\}}e^{\theta(\boldsymbol{u}_i,\boldsymbol{v}_j)/ \tau} + \sum\nolimits_{j \notin \{i\cup\mathcal{N}_{\boldsymbol{u}}(i)\}}e^{\theta(\boldsymbol{u}_i,\boldsymbol{u}_j)/ \tau},
\end{aligned}
\end{equation}
we can thus get 
$\ell\left(\boldsymbol{u}_{i}, \boldsymbol{v}_{i}\right)\leq\ell_{cont}\left(\boldsymbol{u}_{i}, \boldsymbol{v}_{i}\right)$. 
In a similar way, we can derive
$\ell\left(\boldsymbol{v}_{i}, \boldsymbol{u}_{i}\right)\leq\ell_{cont}\left(\boldsymbol{v}_{i}, \boldsymbol{u}_{i}\right)$.
Therefore,
\begin{equation}
\begin{aligned}
\frac{1}{2 N} &\sum_{i=1}^{N}\left(\ell\left(\boldsymbol{u}_{i}, \boldsymbol{v}_{i}\right)+\ell\left(\boldsymbol{v}_{i}, \boldsymbol{u}_{i}\right)\right)\leq \\
&\frac{1}{2 N} \sum_{i=1}^{N}\left(\ell_{cont}\left(\boldsymbol{u}_{i}, \boldsymbol{v}_{i}\right)+\ell_{cont}\left(\boldsymbol{v}_{i}, \boldsymbol{u}_{i}\right)\right),
\end{aligned}
\end{equation}
i.e., $\mathcal{L} \leq \mathcal{L}_{cont}$, which concludes the proof of the first inequality.
\end{proof}

\begin{proof}[Proof for the Second inequality] 
Now we prove $\mathcal{L}_{cont} \leq I(\mathbf{X};\mathbf{U},\mathbf{V})$.

The InfoNCE~\cite{infonce, infonce2} objective is defined as
\begin{equation}
I_{\mathrm{NCE}}(\mathbf{U} ; \mathbf{V}) \triangleq \mathbb{E}\left[\frac{1}{N} \sum_{i=1}^{N} \log \frac{e^{\theta\left(\boldsymbol{u}_{i}, \boldsymbol{v}_{i}\right)}}{\frac{1}{N} \sum_{j=1}^{N} e^{\theta\left(\boldsymbol{u}_{i}, \boldsymbol{v}_{j}\right)}}\right],
\end{equation}
where the expectation is over $N$ samples from the joint distribution $\prod_{i} p\left(\boldsymbol{u}_{i}, \boldsymbol{v}_{i}\right)$. 
The proposed loss function includes two parts as
\begin{equation}
\begin{aligned}
\mathcal{L}_{cont}&=\frac{1}{2}\left(\frac{1}{N} \sum_{i=1}^{N}\ell_{cont}\left(\boldsymbol{u}_{i}, \boldsymbol{v}_{i}\right) +\frac{1}{N} \sum_{i=1}^{N}\ell_{cont}\left(\boldsymbol{v}_{i}, \boldsymbol{u}_{i}\right)\right) \\
&=\frac12\left(\mathcal{L}_{cont}^{(1)}+\mathcal{L}_{cont}^{(2)}\right).
\end{aligned}
\end{equation}
Here, the first term $\mathcal{L}_{cont}^{(1)}$ can be rewritten as
\begin{equation}
\mathcal{L}_{cont}^{(1)}= \frac{1}{N} \sum_{i=1}^{N}\ell_{cont}\left(\boldsymbol{u}_{i}, \boldsymbol{v}_{i}\right)=\mathbb{E}\left[ \frac{1}{N} \sum_{i=1}^{N}\log\frac{\mathrm{pos}}{\mathrm{pos}+\mathrm{neg}} \right]
\end{equation}
with
\begin{align}
\mathrm{pos}&=e^{\theta(\boldsymbol{u}_i,\boldsymbol{v}_i)}+\sum\nolimits_{j\in \mathcal{N}_{\boldsymbol{u}}(i)}e^{\theta(\boldsymbol{u}_i,\boldsymbol{u}_j)}\cdot \mathbf{S}_{ij},\\
\mathrm{neg}&=\sum\nolimits_{j \notin \{i\cup\mathcal{N}_{\boldsymbol{v}}(i)\}}e^{\theta(\boldsymbol{u}_i,\boldsymbol{v}_j)} + \sum\nolimits_{j \notin \{i\cup\mathcal{N}_{\boldsymbol{u}}(i)\}}e^{\theta(\boldsymbol{u}_i,\boldsymbol{u}_j)}.
\end{align}
Here, we let $\tau=1$ for convenience.
We notice that the graphs in real-world scenarios are sparse, i.e., $|i\cup\mathcal{N}_{\boldsymbol{v}}(i)| \ll N$ and $|i\cup\mathcal{N}_{\boldsymbol{u}}(i)| \ll N$ hold. Therefore, we have $\mathrm{neg}\approx \sum_{j=1}^{N} e^{\theta\left(\boldsymbol{u}_{i}, \boldsymbol{v}_{j}\right)} + \sum_{j=1}^{N} e^{\theta\left(\boldsymbol{u}_{i}, \boldsymbol{u}_{j}\right)}$, which indicates
\begin{equation}
\begin{aligned}
\mathbb{E} & \left[ \frac{1}{N} \sum_{i=1}^{N}\log \frac{\mathrm{pos}}{\mathrm{pos}+\mathrm{neg}} \right] \leq
\mathbb{E}\left[ \frac{1}{N} \sum_{i=1}^{N}\log\frac{\mathrm{pos}}{\mathrm{neg}} \right] \\
& \leq \mathbb{E}\left[ \frac{1}{N} \sum_{i=1}^{N}\log\frac{\mathrm{pos}}{\sum_{j=1}^{N} e^{\theta\left(\boldsymbol{u}_{i}, \boldsymbol{v}_{j}\right)}} \right].
\end{aligned}
\end{equation}
As $|i\cup\mathcal{N}_{\boldsymbol{v}}(i)| \ll N$ and $0\leq \mathbf{S}_{ij} \leq 1$, for a small constant $M$, we can get
\begin{equation}
\begin{aligned}
\mathrm{pos}&=e^{\theta(\boldsymbol{u}_i,\boldsymbol{v}_i)}+\sum\nolimits_{j\in \mathcal{N}_{\boldsymbol{u}}(i)}e^{\theta(\boldsymbol{u}_i,\boldsymbol{u}_j)}\cdot \mathbf{S}_{ij} \\
&\leq M\cdot  e^{\theta(\boldsymbol{u}_i,\boldsymbol{v}_i)}\leq N \cdot e^{\theta\left(\boldsymbol{u}_{i}, \boldsymbol{v}_{i}\right)}.
\end{aligned}
\end{equation}
Therefore,
\begin{equation}
\begin{aligned}
\mathbb{E}&\left[ \frac{1}{N} \sum_{i=1}^{N}\log\frac{\mathrm{pos}}{\sum_{j=1}^{N} e^{\theta\left(\boldsymbol{u}_{i}, \boldsymbol{v}_{j}\right)}} \right] \\
&\leq\mathbb{E}\left[\frac{1}{N} \sum_{i=1}^{N} \log \frac{N\cdot e^{\theta\left(\boldsymbol{u}_{i}, \boldsymbol{v}_{i}\right)}}{\sum_{j=1}^{N} e^{\theta\left(\boldsymbol{u}_{i}, \boldsymbol{v}_{j}\right)}}\right]=I_{\mathrm{NCE}}(\mathbf{U} ; \mathbf{V}).
\end{aligned}
\end{equation}
i.e., 
$\mathcal{L}_{cont}^{(1)} \leq I_{NCE}(\mathbf{U};\mathbf{V})$. 
In a similar way, we can also derive 
$\mathcal{L}_{cont}^{(2)} \leq I_{NCE}(\mathbf{V};\mathbf{U})$.
Therefore, we have
\begin{equation}
\mathcal{L}_{cont}\leq \frac{1}{2}\left(I_{NCE}(\mathbf{U};\mathbf{V})+I_{NCE}(\mathbf{V};\mathbf{U})\right).
\end{equation}
According to~\cite{infonce2}, the InfoNCE is a lower bound of MI, i.e., 
\begin{equation}
I_{NCE}(\mathbf{U};\mathbf{V})\leq I(\mathbf{U};\mathbf{V}).
\end{equation}
Therefore, we have 
\begin{equation}\label{eq:app}
\mathcal{L}_{cont}\leq \frac12\left(I(\mathbf{U};\mathbf{V})+I(\mathbf{V};\mathbf{U})\right)=I(\mathbf{U};\mathbf{V}).
\end{equation}

Following~\cite{Cover:2006ei}, for $\mathbf{X}$, $\mathbf{U}$, $\mathbf{V}$ satisfying $\mathbf{U}\leftarrow \mathbf{X}\rightarrow \mathbf{V}$, 
$I(\mathbf{U};\mathbf{V})\leq I(\mathbf{U};\mathbf{X})$
holds. 
Following~\cite{gca}, as $\mathbf{X}$, $\mathbf{U}$, $\mathbf{V}$ also satisfies $\mathbf{X}\rightarrow(\mathbf{U};\mathbf{V})\rightarrow\mathbf{U}$,
$I(\mathbf{X};\mathbf{U})\leq I(\mathbf{X};\mathbf{U},\mathbf{V})$
holds.
Combining the two inequality together with Eq.~\eqref{eq:app}, we finally have 
\begin{equation}
\mathcal{L}_{cont}\leq I(\mathbf{U};\mathbf{V})\leq I(\mathbf{U};\mathbf{X})\leq I(\mathbf{X};\mathbf{U},\mathbf{V}),
\end{equation}
which concludes the proof of the second inequality.
\end{proof}

\section{Baselines}\label{app:baseline}

In this section, we give brief introductions of the baselines used in the paper which are not described in the main paper due to the space constraint.

\begin{itemize}
\setlength{\leftskip}{-2em}
\item \textbf{DeepWalk}~\cite{deepwalk}, \textbf{Node2vec}~\cite{node2vec}: DeepWalk and Node2vec are two unsupervised random walk-based models. DeepWalk adopts skip-gram on node sequences generated by random walk, while Node2vec extends DeepWalk by considering DFS and BFS when sampling neighbor nodes. They only leverage graph topology structure while ignoring raw node features.
\item \textbf{GCN}~\cite{gcn}, \textbf{GAT}~\cite{gat}, \textbf{GraphSAGE}~\cite{graphsage}: GCN, GAT, GraphSAGE are three popular supervised GNNs. Structural information, raw node features, and node labels of the training set are leveraged.
\item\textbf{GAE/VGAE}~\cite{gae}: GAE and VGAE are graph autoencoders that learn node embeddings via vallina/variational autoencoders. Both the encoder and the decoder are implemented with graph convolutional network.
\item\textbf{DGI}~\cite{dgi}: DGI maximizes the mutual information between patch representations and corresponding high-level summaries of graphs which are derived using graph convolutional network.
\item\textbf{HDI}~\cite{hdmi} extends DGI by considering both extrinsic and intrinsic signals via high-order mutual information. Graph convolutional network is also leveraged as the encoder.
\item\textbf{GMI}~\cite{gmi}: GMI applies cross-layer node contrasting and edge contrasting. It also generalizes the idea of conventional mutual information computations to the graph domain.
\item\textbf{InfoGCL}~\cite{infogcl}: InfoGCL follows the Information Bottleneck principle to reduce the mutual information between contrastive parts while keeping task-relevant information intact at both the levels of the individual module and the entire framework.
\item\textbf{MVGRL}~\cite{mvgrl}: MVGRL maximizes the mutual information between the cross-view representations of nodes and graphs using graph diffusion.
\item\textbf{G-BT}~\cite{gbt}: G-BT utilizes a cross-correlation-based loss function instead of negative samples, which enjoys fewer hyperparameters and substantially shorter computation time.
\item\textbf{BGRL}~\cite{bgrl}: BGRL adopts asymmetrical BYOL~\cite{byol} structure to do the node-node level contrast without negative samples to avoid quadratic bottleneck.
\item\textbf{AFGRL}~\cite{afgcl}: AFGRL extends BGRL by generating an alternative view of a graph via discovering nodes that share the local structural information and the global semantics with the graph.
\item\textbf{CCA-SSG}~\cite{cca}: CCA-SSG leverages classical Canonical Correlation Analysis to construct feature-level objective which can discard augmentation-variant information and prevent degenerated solutions.
\item\textbf{COSTA}~\cite{costa}: COSTA alleviates the highly biased node embedding obtained via graph augmentation by performing feature augmentation, i.e., generates augmented features by maintaining  a good sketch of original features.
\item\textbf{GRACE}~\cite{grace}: GRACE adopts the SimCLR architecture which performs graph augmentation on the input graph and considers node-node level contrast on both inter-view and intra-view levels.
\item\textbf{GCA}~\cite{gca}: GCA extends GRACE by considering adaptive graph augmentations based on degree centrality, eigenvector centrality, and PageRank centrality.
\item\textbf{ProGCL}~\cite{progcl}: ProGCL extends GRACE by leveraging hard negative samples via Expectation Maximization to fit the observed node-level similarity distribution. We adopt the ProGCL-weight version as no synthesis of new nodes is leveraged.
\item\textbf{ARIEL}~\cite{ariel}: ARIEL extends GRACE by introducing an adversarial graph view and an information regularizer to extract informative contrastive samples within a reasonable constraint.
\item\textbf{gCooL}~\cite{gcool}: gCooL extends GRACE by jointly
learning the community partition and node representations in
an end-to-end fashion to directly leverage the community structure of a graph.
\end{itemize}

\section{Graph Augmentations}

As we mentioned in the main paper, we adopt two simple and widely used augmentation schemes, i.e., edge dropping and feature masking, as the graph augmentation. For edge dropping, we drop each edge with probability $p_e$. For feature masking, we set each raw feature as $0$ with probability $p_f$. $p_e$ and $p_f$ are set as the same for two augmented graph views, which is widely used in the literature~\cite{grace, gca, cca}. 

\section{Ablation Study}

\begin{table}[!b]
\centering
\vspace{-0.2cm}
\begin{center}
\caption{Ablation study evaluated on Cora.}\label{table:ablation}
\scalebox{1.00}{
\begin{tabular}{l|c}
\toprule
Model & Accuracy(\%) \\
\midrule
HomoGCL & 84.5$\pm$0.5 \\
\quad w/o homophily loss & 84.1$\pm$0.4 \\
\midrule
HomoGCL$_{hd}$ & 84.2$\pm$0.4 \\
\quad w/o homophily loss & 83.9$\pm$0.3 \\
\bottomrule
\end{tabular}
}
\end{center}
\end{table}

In this section, we investigate how each component of HomoGCL, including the homophily loss (Eq.~\eqref{eq:homoloss}) and the soft pair-wise node similarity (the saliency $\mathbf{S}$) contributes to the overall performance. The result is shown in Table~\ref{table:ablation}. Here, in ``w/o homophily loss'', we disable the homophily loss in (Eq.~\eqref{eq:homoloss}), and in ``HomoGCL$_{hd}$'', we add neighbor nodes as positive samples indiscriminately (i.e., hard neighbors) to leverage graph homophily directly. From the table, we have two observations:
\begin{itemize}
\setlength{\leftskip}{-2em}
\item Homophily loss is effective, as graph homophily is a distinctive inductive bias for graph data in-the-wild.
\item HomoGCL which estimates the probability of neighbors being positive performs better than HomoGCL$_{hd}$, as the soft clustering can successfully distinguish true positive samples.
\end{itemize}
The above observations show the effectiveness of the homophily loss and the saliency $\mathbf{S}$ in HomoGCL.

\end{document}